\documentclass{article} 
\usepackage[preprint]{colm2026_conference}

\usepackage{microtype}
\usepackage{hyperref}
\usepackage{url}
\usepackage{booktabs}

\usepackage{float} 
\usepackage{placeins}
\usepackage{natbib}

\usepackage{amsmath,amsfonts,bm}

\def\eqref#1{equation~\ref{#1}}

\def\1{\bm{1}}

\DeclareMathAlphabet{\mathsfit}{\encodingdefault}{\sfdefault}{m}{sl}
\SetMathAlphabet{\mathsfit}{bold}{\encodingdefault}{\sfdefault}{bx}{n}

\usepackage{graphicx}
\usepackage{subcaption}
\usepackage{natbib}

\usepackage{xcolor}
\usepackage{colortbl}
\usepackage{hyperref}
\definecolor{DarkBlue}{rgb}{0.1,0.1,0.5}
\definecolor{DarkGreen}{rgb}{0.1,0.5,0.1}
\definecolor{deepyellow}{RGB}{218, 174, 42}
\hypersetup{
	colorlinks   = true,
	linkcolor    = DarkBlue, 
	urlcolor     = DarkBlue, 
	citecolor    = DarkGreen 
}

\usepackage{xfrac}
\usepackage{amsthm}
\usepackage{thmtools,thm-restate}
\usepackage{amsmath,amsfonts,dsfont}
\usepackage[mathscr]{euscript}
\usepackage{booktabs} 

\usepackage{multirow}
\usepackage{svg}
\usepackage{makecell}
\usepackage{pgfplots}
\pgfplotsset{compat=1.18} 
\usepgfplotslibrary{groupplots}
\usetikzlibrary{arrows.meta, positioning, shapes.geometric, decorations.pathreplacing, backgrounds}

\definecolor{DarkBlue}{rgb}{0.3,0.3,0.70}
\definecolor{azure}{rgb}{0.0, 0.5, 1.0}
\definecolor{darkcerulean}{rgb}{0.03, 0.27, 0.49}
\definecolor{denim}{rgb}{0.08, 0.38, 0.74}
\definecolor{DarkGreen}{rgb}{0.3,0.7,0.3}
\definecolor{lighter-gray}{gray}{0.95}
\hypersetup{
	colorlinks   = true,
	linkcolor    = DarkBlue, 
	urlcolor     = darkcerulean, 
	citecolor    = denim 
}
\definecolor{AlgHighlight}{HTML}{228B22}

\newtheoremstyle{thmstyle}
{0.5em} 
{0.15em} 
{} 
{} 
{\bfseries} 
{.} 
{.5em} 
{} 

\theoremstyle{thmstyle} 
\newtheorem{theorem}{Theorem}

\theoremstyle{definition}

\usepackage[most]{tcolorbox}     
\definecolor{turquoise}{HTML}{30D5C8}  

\newtcolorbox{keytakeawaysbox}{%
  enhanced,
  colback=white,
  colframe=turquoise,
  coltitle=black,                    
  boxrule=0.8pt,
  arc=2mm,                           
  left=2mm,right=2mm,top=0.5mm,bottom=0.5mm,
  title={\small\bfseries Key Takeaways},
  attach boxed title to top left={
    yshift=-\tcboxedtitleheight/2,
    xshift=6pt
  },
  boxed title style={
    colback=white,
    colframe=turquoise,
    boxrule=0.8pt,
    arc=2mm                          
  },
}

\definecolor{takeawayframe}{RGB}{45,82,160}   
\definecolor{takeawayback} {RGB}{245,248,252} 

\newtcolorbox{takeawaybox}[1][]{%
  enhanced, breakable,
  sharp corners=south,         
  colback   = gray!5!white,
  colframe  = gray!10!white,
  boxrule   = 0.7pt,
  left      = 1mm,             
  right     = 1mm,
  top       = 0.5mm,
  bottom    = 0.5mm,
  before skip=0pt,             
  after skip =0pt,             
  before upper=\itshape,       
  #1}                          

\theoremstyle{remark}

\renewcommand{\1}{ \mathds{1}}

\newcommand{\x}{\mathit{x}}

\renewcommand{\emptyset}{\varnothing}

\newcommand{\CommentLines}[1]{}

\usepackage{array}
\newcolumntype{x}[1]{>{\centering\let\newline\\\arraybackslash\hspace{0pt}}m{#1}}
\usepackage{colortbl}

\usepackage{wrapfig}
\usepackage{svg}

\usepackage{amsmath} 
\usepackage{amssymb} 
\usepackage{colortbl} 
\usepackage{enumitem}

\definecolor{green}{HTML}{C6EFCE}
\definecolor{red}{HTML}{FFC7CE}
\definecolor{yellow}{HTML}{FFEB9C}

\usepackage{algorithm}
\usepackage{algorithmic}

\usepackage{listings}

\definecolor{darkgreen}{rgb}{0, 0.6, 0}

\definecolor{titlecolor}{RGB}{70,130,180}  
\definecolor{titlecolor}{RGB}{45,82,160}    
\definecolor{inputcolor}{RGB}{0,100,0}     
\definecolor{inputcolor}{RGB}{25,70,25}
\definecolor{outputcolor}{RGB}{139,69,19}  
\definecolor{stepcolor}{RGB}{25,25,112}    
\definecolor{mathcolor}{RGB}{128,0,128}    
\definecolor{inputcolor}{RGB}{25,111,61}     
\definecolor{outputcolor}{RGB}{155,89,182}   
\definecolor{outputcolor}{RGB}{192,57,43}    
\definecolor{outputcolor}{RGB}{170,85,0}     
\definecolor{outputcolor}{RGB}{45,82,160}    
\definecolor{outputcolor}{RGB}{75,61,136}    

\newcommand{\ttsd}{\textsc{TTEL}}

\makeatletter
\def\blfootnote{\gdef\@thefnmark{}\@footnotetext}
\makeatother

\definecolor{algblue}{RGB}{31,78,121}
\definecolor{alggreen}{RGB}{56,118,29}
\definecolor{algorange}{RGB}{180,95,6}
\definecolor{algpurple}{RGB}{102,51,153}

\newcommand{\dg}[1]{}
\newcommand{\rsc}[1]{}
\usepackage{cleveref}

\title{Test-Time Scaling via Error Localization}

\author{\textbf{Rajiv Shailesh Chitale, Rahul Madhavan, Taneesh Gupta,} \\
\textbf{Deepanway Ghosal, Aravindan Raghuveer} \\[0.15cm]
Google DeepMind
\vspace{-1.4em}
}

\begin{document}

\ifcolmsubmission
\linenumbers
\fi

\maketitle
\blfootnote{\hspace{-0.7cm} \texttt{\{rchitale,rahulmadhavan,tanishg,deepanway,araghuveer\}@google.com}}

\vspace{-0.1in}
\begin{abstract}
Scaling inference-time computation has emerged as a reliable method to improve the performance of large language models on complex reasoning and programming tasks. However, standard approaches such as independent sampling and sequential multi-turn refinement operate without token-level credit assignment, resulting in computational inefficiency, since valid reasoning prefixes are frequently discarded. In this work, we introduce Test-Time Scaling via Error Localization ({\ttsd}), an inference-time algorithm that utilizes fixed or environment feedback to perform token-level error localization. By comparing conditional probabilities under informed feedback against a null-context baseline, {\ttsd} isolates the step at which an error occurred. The algorithm then truncates the trajectory and branches a new generation, maximally reusing the valid prefix. Extensive evaluations demonstrate that {\ttsd} establishes strictly dominating Pareto frontiers across sequential reasoning domains, measured by pass-at-k vs. generated-token cost. With Qwen3-8B on LiveCodeBench, {\ttsd} attains a pass@64 of 71.0\% while generating approximately half as many tokens as independent sampling (360.4k vs.\ 735.0k). Generalizing to math benchmarks AIME-2025 and HMMT-2025, {\ttsd} cleanly outperforms competing test-time baselines across both Qwen3-8B and Qwen3-4B-Thinking-2507.
\end{abstract}

\vspace{-0.2in}
\section{Introduction}
\label{sec:introduction}

\vspace{-0.1in}
Scaling inference-time computation improves the performance of LLMs on complex reasoning and programming tasks~\citep{snell2024scaling, deepseekai2025deepseekr1incentivizingreasoningcapability}. The main approach for this is to generate multiple independent samples and select the best candidate via majority voting~\citep{wang2023h} or verification against test cases~\citep{jain2024livecodebench}. While such best-of-$K$ strategies reliably improve accuracy as $K$ increases, they scale inefficiently: each sample is generated without knowledge of prior attempts, leading to redundant exploration of similar trajectories. For problems that require long reasoning chains, where a single early error invalidates the entire generation, this redundancy is  wasteful.

\vspace{-0.04in}
Sequential multi-turn generation attempts to address this by incorporating environmental feedback into the context window~\citep{madaan2023self}. The model generates a subsequent attempt conditioned on the failure of its previous output. However, sequential refinement often stagnates: when faced with genuinely difficult reasoning challenges, models frequently repeat similar logical errors across turns or fail to identify the precise point where the logic diverged~\citep{kumar2024training}. The context window fills with unsuccessful attempts without yielding a correct solution.

\begin{figure*}[t]
    \centering
    \resizebox{\textwidth}{!}{%
    \begin{tikzpicture}[
        >={Stealth[length=2.5mm, width=1.5mm]},
        node distance=1.2cm and 1.5cm,
        prompt/.style={circle, draw=black, fill=black!80, inner sep=2.5pt, text=white, font=\small\bfseries},
        token/.style={circle, draw=DarkBlue, fill=DarkBlue!20, thick, inner sep=2pt},
        badtoken/.style={circle, draw=red!80!black, fill=red!20, thick, inner sep=2pt},
        goodtoken/.style={circle, draw=DarkGreen, fill=DarkGreen!20, thick, inner sep=2pt},
        terminalbad/.style={star, star points=4, star point ratio=2, draw=red!80!black, fill=red!80, inner sep=1.5pt},
        terminalgood/.style={regular polygon, regular polygon sides=4, draw=DarkGreen, fill=DarkGreen!80, inner sep=2pt},
        path/.style={->, thick, draw=DarkBlue!70},
        wastepath/.style={->, thick, draw=red!60, dashed},
        highlightpath/.style={->, ultra thick, draw=DarkGreen},
        feedback/.style={->, thick, draw=orange, dashed, bend right=25},
        eval/.style={font=\fontsize{7}{8}\selectfont, text=red!80!black},
        grouplabel/.style={font=\bfseries\small, align=center, text=black}
    ]

    \begin{scope}[shift={(0,0)}]
        \node[grouplabel, yshift=2cm] at (2.5,0) {(A) Independent Sampling\\\mdseries(BoN)};
        
        \foreach \y in {1, 0, -1} {
            \node[prompt] (P\y) at (0, \y) {};
            \node[token] (T1\y) at (1, \y) {};
            \node[token] (T2\y) at (2, \y) {};
            \node[token] (T3\y) at (3, \y) {};
            \node[token] (T4\y) at (4, \y) {};
            \node[terminalbad] (End\y) at (5, \y) {};
            
            \draw[path] (P\y) -- (T1\y);
            \draw[path] (T1\y) -- (T2\y);
            \draw[path] (T2\y) -- (T3\y);
            \draw[path] (T3\y) -- (T4\y);
            \draw[path] (T4\y) -- (End\y);
            
            \begin{pgfonlayer}{background}
                \draw[line width=10pt, red!10, rounded corners] 
                    ([xshift=-2pt]T1\y.west) -- ([xshift=2pt]End\y.east);
            \end{pgfonlayer}
        }
        \node[terminalgood] (EndS) at (5, -1) {};
        \node[goodtoken] (T4S) at (4, -1) {};
        \node[goodtoken] (T3S) at (3, -1) {};
        \node[goodtoken] (T2S) at (2, -1) {};
        \node[goodtoken] (T1S) at (1, -1) {};
        \begin{pgfonlayer}{background}
            \draw[line width=10pt, DarkGreen!10, rounded corners] 
                ([xshift=-2pt]T1S.west) -- ([xshift=2pt]EndS.east);
        \end{pgfonlayer}
        \draw[highlightpath] (P-1) -- (T1S);
        \draw[highlightpath] (T1S) -- (T2S);
        \draw[highlightpath] (T2S) -- (T3S);
        \draw[highlightpath] (T3S) -- (T4S);
        \draw[highlightpath] (T4S) -- (EndS);

        \node[font=\footnotesize, text=red!60!black, anchor=north] at (2.5, -1.8) {\bfseries Redundant compute};
    \end{scope}

    \begin{scope}[shift={(6,0)}]
        \node[grouplabel, yshift=2cm] at (2.5,0) {(B) Sequential\\\mdseries Refinement};
        
        \node[prompt] (P) at (0, 1) {};
        
        \node[token] (T11) at (1, 1) {};
        \node[token] (T12) at (2, 1) {};
        \node[token] (T13) at (3, 1) {};
        \node[token] (T14) at (4, 1) {};
        \node[terminalbad] (End1) at (5, 1) {};
        
        \draw[path] (P) -- (T11);
        \draw[path] (T11) -- (T12);
        \draw[path] (T12) -- (T13);
        \draw[path] (T13) -- (T14);
        \draw[path] (T14) -- (End1);

        \node[prompt] (Pf) at (0, 0) {};
        \node[token] (T21) at (1, 0) {};
        \node[token] (T22) at (2, 0) {};
        \node[token] (T23) at (3, 0) {};
        \node[token] (T24) at (4, 0) {};
        \node[terminalbad] (End2) at (5, 0) {};

        \draw[path] (Pf) -- (T21);
        \draw[path] (T21) -- (T22);
        \draw[path] (T22) -- (T23);
        \draw[path] (T23) -- (T24);
        \draw[path] (T24) -- (End2);

        \node[prompt] (Pf2) at (0, -1) {};
        \node[goodtoken] (T31) at (1, -1) {};
        \node[goodtoken] (T32) at (2, -1) {};
        \node[goodtoken] (T33) at (3, -1) {};
        \node[goodtoken] (T34) at (4, -1) {};
        \node[terminalgood] (End3) at (5, -1) {};

        \draw[highlightpath] (Pf2) -- (T31);
        \draw[highlightpath] (T31) -- (T32);
        \draw[highlightpath] (T32) -- (T33);
        \draw[highlightpath] (T33) -- (T34);
        \draw[highlightpath] (T34) -- (End3);

        \draw[feedback, rounded corners=4pt] (End1) -- (5.4, 1) -- (5.4, 0.5) -- (-0.4, 0.5) -- (-0.4, 0) -- (Pf);
        \node[font=\fontsize{6}{7}\selectfont, text=orange!90!black, fill=white, inner sep=1pt] at (2.5, 0.5) {Feedback / Reprompt};

        \draw[feedback, rounded corners=4pt] (End2) -- (5.4, 0) -- (5.4, -0.5) -- (-0.4, -0.5) -- (-0.4, -1) -- (Pf2);
        \node[font=\fontsize{6}{7}\selectfont, text=orange!90!black, fill=white, inner sep=1pt] at (2.5, -0.5) {Feedback / Reprompt};

        \begin{pgfonlayer}{background}
            \draw[line width=10pt, red!10, rounded corners] ([xshift=-2pt]T11.west) -- ([xshift=2pt]End1.east);
            \draw[line width=10pt, red!10, rounded corners] ([xshift=-2pt]T21.west) -- ([xshift=2pt]End2.east);
        \end{pgfonlayer}

        \node[font=\footnotesize, text=red!60!black, anchor=north] at (2.5, -1.8) {\bfseries Evaluates only at end};
    \end{scope}

    \begin{scope}[shift={(12,0)}]
        \node[grouplabel, yshift=2cm] at (2.5,0) {(C) {\ttsd} Prefix Tree\\\mdseries (Ours)};
        
        \node[prompt] (P) at (0, 0) {};
        
        \node[token] (T1) at (1, 0) {};
        \node[token] (T2) at (2, 0) {};
        
        \draw[path] (P) -- (T1);
        \draw[path] (T1) -- (T2);

        \node[badtoken, label={[eval, label distance=1pt]below:\textbf{\texttimes} Error}] (T3) at (3, -1) {};
        \node[token] (T4) at (4, -1) {};
        \node[terminalbad] (End1) at (5, -1) {};
        
        \draw[path] (T2) to[out=0, in=180] (T3);
        \draw[wastepath] (T3) -- (T4) node[midway, below=-2pt] {\tiny prune};
        \draw[wastepath] (T4) -- (End1);

        \node[token] (T3b) at (3, 0) {};
        \node[badtoken, label={[eval, label distance=1pt]above:\textbf{\texttimes} Error}] (T4b) at (4, 0.5) {};
        \node[terminalbad] (End2) at (5, 0.5) {};

        \draw[path] (T2) -- (T3b);
        \draw[path] (T3b) to[out=0, in=180] (T4b);
        \draw[wastepath] (T4b) -- (End2);

        \node[goodtoken] (T4c) at (4, -0.5) {};
        \node[terminalgood] (End3) at (5, -0.5) {};

        \draw[highlightpath] (T3b) to[out=0, in=180] (T4c);
        \draw[highlightpath] (T4c) -- (End3);

        \draw[decorate, decoration={brace, amplitude=5pt}, thick, DarkBlue] 
            (2, 0.3) -- (1, 0.3) node[midway, above=5pt, font=\footnotesize] {Shared Prefix};
            
        \node[font=\footnotesize, text=DarkGreen, anchor=north] at (2.5, -1.8) {\bfseries Localized error branching};
    \end{scope}

    \end{tikzpicture}%
    }
    \caption{\textbf{Search Tree Typology Comparison.} (A) Independent Sampling generates entirely separate full-length trajectories, leading to high computational waste from parallel exploration of similar paths. (B) Sequential Refinement methods link full trajectories end-to-end, evaluating only at terminal steps. (C) \ttsd{} structures search as a prefix-sharing tree, explicitly capitalizing on token-level error localization to branch exactly where reasoning diverges, avoiding redundant computation of shared prefixes and averting terminal failure.}\label{fig:typology}
\end{figure*}

A deeper issue is that post-hoc feedback is inherently \emph{trajectory-conditional}. When a candidate solution fails, the resulting feedback, such as an environment generated error, or self-generated critique, is informative about the specific reasoning path that produced that failure. If a model is restarted from scratch, it may never revisit the same intermediate states at which the feedback is most relevant. 
This creates a fundamental mismatch: the supervision is path-specific, but existing inference-time strategies such as sequential refinement largely apply it as though it were a global revision signal.

Tree-structured search methods, such as Tree of Thoughts~\citep{yao2023tree}, offer an alternative by exploring multiple reasoning paths and backtracking. These methods typically operate at the level of coarse reasoning steps, relying on self-evaluation heuristics to prune branches. Population-based approaches like Recursive Self-Aggregation~\citep{venkatraman2025recursive} refine solutions by aggregating information across a pool of candidates. Both families of methods improve over flat sampling, but neither leverages fine-grained, token-level signals about where reasoning went wrong. Essentially, these existing approaches  either lose the state on which feedback is informative, or preserve the state without extracting a precise corrective signal from it.

\vspace{-0.03in}
This suggests a more targeted view of inference-time improvement: rather than treating feedback as a generic instruction for a fresh retry, can a model use feedback to localize error \emph{on the specific trajectory that produced it}, retain the useful prefix of that trajectory, and selectively regenerate only the suffix where the reasoning becomes suspect? Motivated by this question, we investigate the following core problem:
\begin{tcolorbox}[colback=gray!5!white, colframe=gray!40!white, arc=1mm, boxrule=0.5pt, left=2mm, right=2mm, top=0mm, bottom=0mm, before skip=0.5em, after skip=0.5em]
\centering \textit{Can an LLM perform token-level credit assignment on its own failed trajectories, \\ and leverage this localized signal to guide test-time scaling optimally?}
\end{tcolorbox}

We propose {\ttsd}, an inference-time search algorithm that performs token-level error localization without gradient updates. When a generated solution fails, we prompt the model with the environmental feedback to re-evaluate the original trajectory via prompt log-probabilities. By comparing token probabilities before and after observing this feedback, we identify positions of maximal disagreement. A sharp drop in probability indicates that the model strongly rejects its prior decision at that specific step. 

\vspace{-0.03in}
To ensure these probability shifts are genuinely driven by the feedback rather than generic context changes, we filter the signal against a baseline with non-diagnostic feedback. {\ttsd} then uses this localized signal to truncate the reasoning trace at the highest-confidence error and branch a new generation from that point. By directing computation toward correcting specific mistakes rather than regenerating entire solutions, this feedback-guided tree search achieves substantially higher token efficiency than standard sampling baselines across diverse reasoning tasks, including competitive programming~\citep{jain2024livecodebench} and complex mathematical reasoning ~\citep{balunovic2025matharena}.

\vspace{-0.12in}
\paragraph{Our Contributions.} Primarily, this paper shows that LLMs can error correct failing trajectories at inference time based on feedback, and use it to guide test-time search. Specifically:

\vspace{-0.08in}
\textbf{1. Token-Level Error Localization:} We introduce {\ttsd}, an inference-time algorithm that uses probability contrasts by comparing informed environmental feedback against non-diagnostic feedback to localize errors during generation.

\vspace{-0.08in}
\textbf{2. Efficient Test-Time Search:} By coupling local credit assignment with trajectory retention, {\ttsd} truncates and branches only the faulty suffix of a trajectory. This maximizes the reuse of valid reasoning prefixes and eliminates the redundant exploration.

\vspace{-0.08in}
\textbf{3. Empirical Efficiency:} We establish strictly dominating Pareto frontiers across math benchmarks like AIME-2025, and coding benchmarks like LiveCodeBench where \ttsd\ attains pass@64 of 71.0\% at roughly half the token cost of independent sampling.

\section{Related Work}
\label{sec:related_work}

We provide an abbreviated overview of related work below; an extended discussion mapping {\ttsd} to the broader literature is available in Appendix~\ref{app:related_work_extended}.

\paragraph{Test-Time Scaling and Tree Search.}
Reasoning performance scales reliably with inference-time compute \citep{snell2024scaling, deepseekai2025deepseekr1incentivizingreasoningcapability}, typically through robust independent sampling arrays \citep{wang2023h}. To iterate beyond flat sampling, sequential refinement methods prepend execution feedback to prompt self-correction \citep{madaan2023self, klissarov2026improving}. However, because they treat trajectory-specific feedback as a global instruction, these models often regenerate reasoning from scratch, triggering context bloat or hallucinated logic \citep{kumar2024training}. Structured backtracking algorithms like Tree of Thoughts \citep{yao2023tree} and population-based search \citep{venkatraman2025recursive} remedy this by exploring partial branches, but they rely on programmatic chunking and external heuristics rather than internal, token-level signals.

\paragraph{On-Policy Distillation and Preference Optimization.}
Recent work has increasingly utilized natural language assessment for dense credit assignment. By evaluating the ratio between a base policy and a feedback-conditioned teacher, methods like Generalized Knowledge Distillation \citep{agarwal2024policy} and On-Policy Self-Distillation \citep{zhao2026self} optimize student models directly on their own generated rollouts. This core mathematical insight, minimizing the Kullback-Leibler divergence driven by $\pi_\theta(\cdot \mid x) / \pi_\theta(\cdot \mid x, f)$, has established the foundation for new reinforcement learning algorithms, most notably Self-Distillation Policy Optimization (SDPO; \citealt{hubotter2026reinforcement}) and RLTF \citep{song2026expanding}. Similar mechanisms have also been applied successfully in experiential context distillation (OPCD; \citealt{ye2026policy}) and continual learning (SDFT; \citealt{shenfeld2026self}).

\paragraph{Repurposing the Signal for Inference-Time Search.}
Crucially, this existing body of work restricts the application of the $\pi_\theta(\cdot \mid x) / \pi_\theta(\cdot \mid x, f)$ divergence to offline gradient updates, yielding static policy networks. The direct capability of this ratio to localize errors purely as an online, inference-time heuristic has remained fundamentally unaddressed. {\ttsd} bridges this gap by repurposing the token-level signal used in the SDPO framework. Specifically, {\ttsd} extracts an organic, native pruning criterion that structures and bounds test-time search without requiring parameter updates.

\section{Preliminaries and Problem Formulation}
\label{sec:preliminaries}

Let $\pi_\theta$ denote a pre-trained autoregressive language model over vocabulary $\mathcal V$. Given a problem prompt $x$, the model generates a trajectory
$
y = (y_1,\dots,y_N)
$
with token probabilities
\begin{equation}
p_t^{(S)} := \pi_\theta(y_t \mid x, y_{<t}), \qquad y_{<t} := (y_1,\dots,y_{t-1}).
\end{equation}
For a realized trajectory $y$, we refer to $p_t^{(S)}$ as the \emph{student token probability}, i.e., the probability assigned to the generated token $y_t$ during the original rollout.

We assume access to an execution environment $\mathcal E$, which evaluates a candidate trajectory $y$ on problem $x$ and returns a textual feedback string $f$.

The central challenge in test-time scaling is a \emph{credit assignment bottleneck}: when a generated trajectory fails, the environment evaluates the trajectory globally, but does not identify the specific token position at which the reasoning first became inconsistent with a successful solution. Standard best-of-$K$ sampling discards failed trajectories entirely, while sequential refinement conditions on feedback but does not explicitly localize where to revise the trajectory. Our goal is to use the model itself to perform token-level post-hoc error localization on a failed trajectory, and then use that localized signal to guide branching during inference-time search.

\clearpage
\section{Methodology}
\label{sec:methodology}

{\ttsd} operates on a single pre-trained language model $\pi_\theta$ that serves as both the generator and the evaluator of its own outputs. The core mechanism computes the same token-level divergence signal that train-time self-distillation algorithms use to update model weights~\citep{hubotter2026reinforcement}, but instead of aggregating this signal into a gradient step, {\ttsd} uses it to prune and branch an inference-time search tree. We formalize the algorithm in two core components: detecting and filtering token-level error signals (spikes), and utilizing those signals to guide a search and branching strategy over candidate solutions.

\subsection{Spike Detection and Filtering}
\label{sec:spike_detection}

\begin{wrapfigure}{R}{0.55\textwidth}
    \vspace{-10pt}
    \centering
    \begin{tikzpicture}[
        font=\sffamily,
        >={Stealth[length=2mm, width=1.5mm]}
    ]
    
    \usepgfplotslibrary{groupplots}
    
    \begin{groupplot}[
        group style={
            group size=1 by 3,
            vertical sep=1cm,
        },
        width=7.5cm, 
        height=3.0cm,
        xmin=0, xmax=21,
        ymin=0, ymax=5,
        xtick=\empty,
        ytick=\empty,
        axis x line=bottom,
        axis y line=left,
        enlarge x limits=false,
        clip=false
    ]
    
    \nextgroupplot[ylabel={Raw Spike $\Delta_t$}, ylabel style={align=center}]
        \addplot[ycomb, color=blue!80, line width=1.5pt, mark=*] coordinates {
            (2, 0.8) (3, 0.4) (5, 4.5) (7, 1.2) (9, 0.5) (11, 4.2) (14, 3.2) (16, 0.9) (18, 2.5) (20, 1.1)
        };
        \draw[red, dashed, thick] (axis cs:0, 2.5) -- (axis cs:21, 2.5) node[right, font=\footnotesize] {$\tau$};
        \node[anchor=south west, font=\footnotesize] at (axis cs: 0, 5) {\textbf{Step 1:} Compute Teacher Divergence};

    \nextgroupplot[ylabel={Noise $\Delta_t^{(B)}$}, ylabel style={align=center}]
        \addplot[ycomb, color=orange!80, line width=1.5pt, mark=*] coordinates {
            (2, 0.7) (3, 0.3) (5, 0.6) (7, 1.0) (9, 0.4) (11, 4.0) (14, 0.8) (16, 0.5) (18, 0.2) (20, 0.9)
        };
        \draw[red, dashed, thick] (axis cs:0, 2.0) -- (axis cs:21, 2.0) node[right, font=\footnotesize] {$\tau_B$};
        \node[anchor=south west, font=\footnotesize] at (axis cs: 0, 5) {\textbf{Step 2:} Compute Baseline Divergence};
        
        \draw[->, DarkBlue, shorten >=2pt] (axis cs: 12, 4.5) node[right, font=\scriptsize, text=DarkBlue] {Context-level artifact (Noise)} to[out=180, in=90] (axis cs: 11, 4.1);

    \nextgroupplot[ylabel={Filtered $E_{\text{filtered}}$}, ylabel style={align=center}, 
                   xlabel={Token Position (Time Step $t$)}, 
                   xtick={5, 14, 18}, xticklabels={$t_1^*$, $t_B$, $t_C$}]
        \addplot[ycomb, color=DarkGreen, line width=1.5pt, mark=*] coordinates {
            (5, 4.5) (14, 3.2) (18, 2.5)
        };
        \addplot[ycomb, color=red!80, line width=1.8pt, mark=star, mark size=4pt] coordinates {
            (5, 4.5)
        };
        \node[anchor=south west, font=\footnotesize] at (axis cs: 0, 5) {\textbf{Step 3:} Filter \& Identify Branch Point $t_1^*$};
        
    \end{groupplot}

    \end{tikzpicture}
    \caption{\textbf{Token-Level Error Localization.} {\ttsd} filters raw feedback-induced probability drops (blue) against a non-diagnostic context baseline (orange) to distinguish genuine errors from context artifacts. The maximal filtered spike (green) is selected as the precise branch point $t_1^*$.}\label{fig:spike_mechanism}
\end{wrapfigure}

Given a prompt $x$, the model first generates a candidate trajectory
$
y = (y_1,\dots,y_N)
$
under standard autoregressive decoding. For this realized trajectory, we record the \emph{student} token probabilities
\begin{equation}
p_t^{(S)} := \pi_\theta(y_t \mid x, y_{<t}).
\end{equation}

We obtain textual feedback $f$ and then re-score the \emph{same} token sequence $y$ under a feedback-augmented context, yielding the \emph{teacher} token probabilities
\begin{equation}
p_t^{(T)}(f) := \pi_\theta(y_t \mid x, y_{<t}, f).
\end{equation}


\textbf{Note:} For notational simplicity, $\pi_\theta(y_t \mid x, y_{<t}, f)$ denotes the probability assigned to token $y_t$ when the original trajectory is re-scored under feedback $f$. Here, $y_{<t}$ is the original trajectory prefix preceding $y_t$.

We define the raw feedback-conditioned spike at token $t$ as
\begin{equation}
\Delta_t(f) := p_t^{(S)} - p_t^{(T)}(f).
\label{eq:raw_spike}
\end{equation}
A large positive value of $\Delta_t(f)$ indicates that, after observing feedback $f$, the model assigns substantially lower probability to its original token choice $y_t$ than it did during generation.

However, this raw spike can reflect two distinct effects: (i) genuine semantic re-evaluation induced by task-relevant feedback, and (ii) generic probability shifts caused by appending additional text to the context window. To control for the latter, we introduce a null feedback string $f_{\varnothing}$ that contains a non-diagnostic instruction such as ``Repeat the previous attempt word by word, but skip portions of redundant thinking''. We then compute the corresponding \emph{baseline} token probabilities

\begin{equation}
p_t^{(B)}(f_{\varnothing}) := \pi_\theta(y_t \mid x, y_{<t}, f_{\varnothing}).
\end{equation}
This yields the baseline spike
\begin{equation}
\Delta_t(f_{\varnothing}) := p_t^{(S)} - p_t^{(B)}(f_{\varnothing}).
\label{eq:baseline_spike}
\end{equation}

We then define the \emph{filtered} spike score
\begin{equation}
g_t := \Delta_t(f) - \Delta_t(f_{\varnothing})
     = p_t^{(B)}(f_{\varnothing}) - p_t^{(T)}(f).
\label{eq:filtered_spike}
\end{equation}
This score isolates feedback-specific disagreement by subtracting out generic context-induced shifts that also arise under non-diagnostic feedback.

In practice, we retain token positions whose teacher spike exceeds a threshold while the null-feedback spike remains small:
\begin{equation}
E_{\mathrm{filtered}}
=
\left\{
t \in \{1,\dots,N\}
\;\middle|\;
\Delta_t(f) > \tau
\;\land\;
\Delta_t(f_{\varnothing}) \le \tau_B
\right\}.
\label{eq:filtered_error_set}
\end{equation}

\subsection{Search and Branching Strategy}
\label{sec:branching}

\begin{wrapfigure}{R}{0.55\textwidth}
    \vspace{-35pt}
    \centering
    \resizebox{\linewidth}{!}{%
    \begin{tikzpicture}[
        font=\sffamily\normalsize,
        >={Stealth[length=2mm, width=1.5mm]},
        prompt/.style={rectangle, rounded corners=2pt, draw=titlecolor!80!black, fill=titlecolor, text=white, inner sep=4pt, font=\bfseries},
        token/.style={circle, draw=DarkBlue!80, fill=DarkBlue!20, inner sep=1.8pt, minimum size=0.4cm},
        reused/.style={circle, draw=DarkGreen!80, fill=DarkGreen!30, inner sep=1.8pt, minimum size=0.4cm},
        deadtoken/.style={circle, draw=gray!60, fill=gray!20, inner sep=1.8pt, minimum size=0.4cm},
        path/.style={->, thick, draw=DarkBlue!50},
        reusedpath/.style={->, ultra thick, draw=DarkGreen!80},
        deadpath/.style={->, thick, draw=gray!40},
        spike/.style={thick, draw=blue!70},
        maxspike/.style={ultra thick, draw=red!80, cap=round},
        evalbad/.style={text=red!80, font=\Large\bfseries},
        evalgood/.style={text=DarkGreen, font=\Large\bfseries}
    ]

    \def\ydelta{-3.2} 
    \def\xstep{0.55}  

    \begin{scope}[shift={(0, 0)}]
        \node[anchor=east, font=\large\bfseries] at (-2.0, 0) {Iteration 1};
        \node[prompt] (P1) at (-1.0, 0) {Prompt $x$};

        \draw[gray!30, thick] (0, 0) -- (12*\xstep, 0);
        \draw[red!40, dashed, thick] (0, 0.8) -- (12*\xstep, 0.8) node[right, font=\normalsize, text=red!80] {$\tau$};

        \foreach \h [count=\xi] in {0.2, 0.4, 0.1, 0.5, 0.3, 2.0, 0.6, 0.2, 0.9, 0.4, 0.7, 0.1} {
            \pgfmathsetmacro{\x}{(\xi-1)*\xstep}
            \node[token] (T1-\xi) at (\x, 0) {};
            \ifdim\h pt>1.9pt
                \draw[maxspike] (T1-\xi.north) -- (\x, \h) node[above, font=\normalsize, text=red!80!black] {$t_1^*$};
                \node[star, star points=5, star point ratio=2.25, fill=red!70, inner sep=1pt] at (\x, \h) {};
            \else
                \draw[spike] (T1-\xi.north) -- (\x, \h);
            \fi
        }

        \draw[path] (P1) -- (T1-1);
        \foreach \xi [count=\xiprev] in {2,3,4,5,6,7,8,9,10,11,12} {
             \draw[path] (T1-\xiprev) -- (T1-\xi);
        }

        \node[evalbad] at (12.5*\xstep, 0) {\texttimes};
        
        \node[anchor=north west, font=\large, text=black!80] at (0, -0.6) {Spike $t_1^*$ found. Branch from prefix $y_{<t_1^*}$.};
    \end{scope}

    \begin{scope}[shift={(0, \ydelta)}]
        \node[anchor=east, font=\large\bfseries] at (-2.0, 0) {Iteration 2};
        \node[prompt] (P2) at (-1.0, 0) {Prompt $x$};

        \draw[red!40, dashed, thick] (0, 0.8) -- (12*\xstep, 0.8);

        \foreach \i in {1,...,5} {
            \pgfmathsetmacro{\x}{(\i-1)*\xstep}
            \node[reused] (T2-\i) at (\x, 0) {};
            \draw[spike, opacity=0.3] (T2-\i.north) -- (\x, 0.2); 
        }

        \foreach \h [count=\i from 6] in {0.2, 0.5, 0.4, 1.6, 0.3, 1.0, 0.6} {
            \pgfmathsetmacro{\x}{(\i-1)*\xstep}
            \node[token] (T2-\i) at (\x, 0) {};
            \ifdim\h pt>1.5pt
                \draw[maxspike] (T2-\i.north) -- (\x, \h) node[above, font=\normalsize, text=red!80!black] {$t_2^*$};
                \node[star, star points=5, star point ratio=2.25, fill=red!70, inner sep=1pt] at (\x, \h) {};
            \else
                \draw[spike] (T2-\i.north) -- (\x, \h);
            \fi
        }

        \foreach \i in {6,7,8,9} {
            \pgfmathsetmacro{\x}{(\i-1)*\xstep}
            \node[deadtoken, opacity=0.3] at (\x, 0.5) {};
        }

        \draw[reusedpath] (P2) -- (T2-1);
        \foreach \i [count=\iprev] in {2,3,4,5} {
             \draw[reusedpath] (T2-\iprev) -- (T2-\i);
        }
        
        \foreach \i [count=\iprev from 5] in {6,7,8,9,10,11,12} {
             \draw[path] (T2-\iprev) -- (T2-\i);
        }

        \node[evalbad] at (12.5*\xstep, 0) {\texttimes};
        
        \pgfmathsetmacro{\bracex}{4*\xstep}
        \draw[decorate, decoration={brace, amplitude=4pt, mirror}, thick, DarkGreen] (0, -0.3) -- (\bracex, -0.3) node[midway, below=4pt, font=\normalsize] {Reused Prefix};
        
        \node[anchor=north west, font=\large, text=black!80] at (0, -1.2) {Spike $t_2^*$ found. Branch from prefix $y_{<t_2^*}$.};
    \end{scope}

    \begin{scope}[shift={(0, 2*\ydelta)}]
        \node[anchor=east, font=\large\bfseries] at (-2.0, 0) {Iteration 3};
        \node[prompt] (P3) at (-1.0, 0) {Prompt $x$};

        \draw[red!40, dashed, thick] (0, 0.8) -- (14*\xstep, 0.8);

        \foreach \i in {1,...,8} {
            \pgfmathsetmacro{\x}{(\i-1)*\xstep}
            \node[reused] (T3-\i) at (\x, 0) {};
            \draw[spike, opacity=0.3] (T3-\i.north) -- (\x, 0.2); 
        }

        \foreach \h [count=\i from 9] in {0.3, 0.5, 0.2, 0.7, 0.4, 0.6} {
            \pgfmathsetmacro{\x}{(\i-1)*\xstep}
            \node[token] (T3-\i) at (\x, 0) {};
            \draw[spike] (T3-\i.north) -- (\x, \h);
        }

        \draw[reusedpath] (P3) -- (T3-1);
        \foreach \i [count=\iprev] in {2,3,4,5,6,7,8} {
             \draw[reusedpath] (T3-\iprev) -- (T3-\i);
        }
        
        \foreach \i [count=\iprev from 8] in {9,10,11,12,13,14} {
             \draw[path] (T3-\iprev) -- (T3-\i);
        }

        \node[evalbad] at (14.5*\xstep, 0) {\texttimes};
        
        \pgfmathsetmacro{\bracex}{7*\xstep}
        \draw[decorate, decoration={brace, amplitude=4pt, mirror}, thick, DarkGreen] (0, -0.3) -- (\bracex, -0.3) node[midway, below=4pt, font=\normalsize] {Expanded Reused Prefix};
        
        \node[anchor=north west, font=\large, text=black!80] at (0, -1.2) {\textbf{No spike} crosses $\tau$. Trigger \textbf{Restart}.};
    \end{scope}

    \begin{scope}[shift={(0, 3*\ydelta)}]
        \node[anchor=east, font=\large\bfseries] at (-2.0, 0) {Iteration 4};
        \node[prompt] (P4) at (-1.0, 0) {Prompt $x$};

        \draw[red!40, dashed, thick] (0, 0.8) -- (12*\xstep, 0.8);

        \foreach \h [count=\i] in {0.2, 0.5, 0.1, 0.3, 0.4, 0.1, 0.2, 0.7, 0.4, 0.3, 0.5} {
            \pgfmathsetmacro{\x}{(\i-1)*\xstep}
            \node[token] (T4-\i) at (\x, 0) {};
            \draw[spike] (T4-\i.north) -- (\x, \h);
        }

        \draw[path] (P4) -- (T4-1);
        \foreach \i [count=\iprev] in {2,3,4,5,6,7,8,9,10,11} {
             \draw[path] (T4-\iprev) -- (T4-\i);
        }

        \node[evalgood] at (11*\xstep, 0) {\checkmark};
        
        \node[anchor=north west, font=\large, text=black!80] at (0, -0.6) {Fresh trajectory.};
    \end{scope}

    \end{tikzpicture}%
    }
    \caption{\textbf{{\ttsd} Search and Branching Dynamics.} The reasoning trajectory iteratively evolves to isolate errors. At Iteration 1, the model detects a filtered error spike $t_1^*$ (red) and branches a new generation from the retained prefix $y_{<t_1^*}$ (green). If a trajectory fails without a discernible spike (e.g., Iteration 3 where $E_{\text{filtered}} = \emptyset$), the tree search triggers a restart from the root prompt.\vspace{-30pt}}\label{fig:search_branching}
\end{wrapfigure}

Once the localized error set $E_{\mathrm{filtered}}$ is computed, the algorithm must choose where to branch. We select the branch point as the token with the strongest feedback-specific disagreement:

\begin{wrapfigure}{R}{0.52\textwidth} 
\vspace{-30pt} 
\begin{tcolorbox}[
    enhanced,
    width=\linewidth, 
    colback=white,
    colframe=algblue,
    coltitle=white,
    fonttitle=\bfseries\small,
    title=Alg. 1 {\ttsd}: Feedback-Guided Tree Search,
    boxrule=0.6pt,
    arc=1.5mm,
    left=1mm, right=1mm, top=1mm, bottom=1mm, 
]
\begin{algorithmic}[1]
\footnotesize 
\REQUIRE Prompt $x$, model $\pi_\theta$, env. $\mathcal{E}$, thresholds $\tau,\tau_B$, budget $B$
\STATE $\mathcal{T} \leftarrow \{\emptyset\}$ \hfill \textcolor{alggreen}{\texttt{// retained prefixes}}
\STATE $\mathcal{S} \leftarrow \emptyset$ \hfill \textcolor{alggreen}
{\texttt{// generated candidates}}
\STATE $f \leftarrow \emptyset$ \hfill \textcolor{alggreen}
{\texttt{// no feedback initially}}
\\
\dg{budget should also include number of rollouts in the next line?}
\WHILE{tokens generated $< B$}
    \STATE Select leaf $y_{<t} \in \mathcal{T}$
    \STATE $y \leftarrow \textsc{Generate}(\pi_\theta, x, y_{<t}, f)$
    \STATE $f \leftarrow \mathcal{E}(x, y)$ \hfill \textcolor{alggreen}{\texttt{// feedback}}
    \STATE \textcolor{algblue}{$\triangleright$\; \textit{Spike Detection}}
    \STATE $\Delta_t(f) \leftarrow p_t^{(S)} - p_t^{(T)}(f)$ \hfill \textcolor{alggreen}{\texttt{// cond. spike}}
    \STATE $\Delta_t(f_{\varnothing}) \leftarrow p_t^{(S)} - p_t^{(B)}(f_{\varnothing})$ \hfill \textcolor{alggreen}{\texttt{// null spike}}
    \STATE $E \leftarrow \{\, t \mid \Delta_t(f) > \tau \land \Delta_t(f_{\varnothing}) \le \tau_B \,\}$
    \STATE \textcolor{algblue}{$\triangleright$\; \textit{Branching}}
    \IF{$E \neq \emptyset$}
        \STATE $t^* \leftarrow \arg\max_{t \in E} (\Delta_t(f) - \Delta_t(f_{\varnothing}))$ \hfill 
        \STATE Add $y_{<t^*}$ to $\mathcal{T}$ \hfill \textcolor{algorange}{\texttt{// branch at max spike}}
    \ELSE
        \STATE $\mathcal{T} \leftarrow \{\emptyset\}$
        \STATE $f \leftarrow \emptyset$
\ENDIF
        \STATE $\mathcal{S} \leftarrow \mathcal{S} \cup \{y\}$   
\ENDWHILE
\STATE \textbf{return} $\mathcal{S}$
\end{algorithmic}
\end{tcolorbox}
\vspace{-55pt} 
\end{wrapfigure}

\begin{equation}
t^*
=
\arg\max_{t \in E_{\mathrm{filtered}}} g_t.
\label{eq:branch}
\end{equation}

The search tree then truncates the failed trajectory at position $t^*-1$ and launches a new generation from the retained prefix $y_{<t^*}$. This mechanism reuses the portion of the trajectory that precedes the strongest localized error signal, rather than discarding the full generation and restarting from scratch.

When $E_{\mathrm{filtered}} = \varnothing$, the available feedback does not localize any actionable token-level error. In this case, the algorithm performs a restart from the original prompt $x$.

\section{The TTEL Algorithm}
\label{sec:algorithm}

Algorithm~1 summarizes the full {\ttsd} procedure. The search tree $\mathcal{T}$ stores \emph{retained prefixes} of previously generated trajectories, with the root corresponding to the empty prefix. At each iteration, the algorithm selects a leaf prefix $y_{<t}$ from $\mathcal{T}$ and generates a full candidate continuation conditioned on both the original problem prompt $x$ and the retained prefix. The resulting trajectory is then evaluated by the execution environment $\mathcal{E}$, which returns feedback $f$ for re-scoring and subsequent regeneration. The diagnostic component of $f$ may contain environment-derived information, such as a compiler error, runtime error, or failing public test case.


The algorithm performs token-level spike detection by re-scoring the same trajectory under two alternative contexts: a feedback-conditioned context using the actual feedback $f$, and a null-feedback context using a non-diagnostic message $f_{\varnothing}$. This yields a feedback-conditioned spike score $\Delta_t(f) = p_t^{(S)} - p_t^{(T)}(f)$ and a baseline spike score $\Delta_t(f_{\varnothing}) = p_t^{(S)} - p_t^{(B)}(f_{\varnothing})$ at each token position. The filtered spike score \rsc{5. small s or delta? Also check S. Should null-baseline input be shown}
\[
g_t = \Delta_t(f) - \Delta_t(f_{\varnothing})
\]
captures disagreement that is specific to the observed failure signal rather than generic context augmentation.

The algorithm then constructs the localized error set
\[
E = \{\, t \mid \Delta_t(f) > \tau \;\land\; \Delta_t(f_{\varnothing}) \le \tau_B \,\},
\]
which contains token positions whose probability drops sharply under true feedback but not under null feedback. If this set is nonempty, {\ttsd} selects the branch point$t^* = \arg\max_{t \in E} g_t$
and adds the retained prefix $y_{<t^*}$ to the tree. A subsequent generation from this prefix reuses the earlier portion of the trajectory while revising the region where the model exhibits the strongest feedback-specific disagreement. If no actionable spike is detected, the algorithm performs a restart by adding the empty prefix back into the tree, thereby encouraging continued exploration from scratch rather than overcommitting to an uninformative failed trajectory.

The procedure repeats until the inference-time budget $K$ is exhausted, where the budget may be measured in generated tokens, completed trajectories, or environment calls. The procedure returns the generated candidate set $\mathcal S$, which is evaluated using pass@$k$. An external selector, such as a verifier score, execution outcome, or majority-based aggregation, may optionally be applied when a single answer is required. In this way, {\ttsd} converts token-level self-re-evaluation into an explicit tree-search policy that adaptively allocates computation toward revising localized errors instead of repeatedly regenerating entire solutions.


\section{Theoretical Analysis}
\label{sec:theory}

\paragraph{Notation.}
Let $x$ be the problem, $y$ a failed trajectory, and $f$ the corresponding feedback, which includes $y$. Let $c = (x, f)$ denote this shared context. Suppose {\ttsd} selects a branch point $t^*$ and retains the anchor prefix $a = y_{<t^*}$.

We compare two post-failure strategies:

\vspace{-0.1in}
\paragraph{1. Restart:} Generating a fresh attempt from the context without anchoring, $Y^{\mathrm R} \sim \pi_\theta(\cdot \mid c)$.


\vspace{-0.1in}
\paragraph{2. Branch:} Forcing generation to begin from the anchor prefix, $Y^{\mathrm B} \sim \pi_\theta(\cdot \mid c, a)$.


Let $V(\hat{y}) \in \{0, 1\}$ denote whether candidate $\hat{y}$ correctly solves the problem.

\begin{theorem}[Branching Advantage]
\label{thm:branch_advantage}
Assume \emph{autoregressive prefix consistency}, meaning that freely generating a sequence that begins with $a$ is distributionally equivalent to explicitly anchoring generation at $a$:
\[
\mathcal{L}\bigl(Y^{\mathrm R}_{\ge |a|+1} \bigm| Y^{\mathrm R}_{< |a|+1} = a,\, c\bigr) = \mathcal{L}\bigl(Y^{\mathrm B}_{\ge |a|+1} \bigm| c, a\bigr).
\]

Let $p_{a} = \Pr(Y^{\mathrm R} \text{ begins with } a \mid c)$ be the probability that a restart naturally recovers the anchor prefix. Let $s_{a}$ and $s_{\neg a}$ be the probability of success given that the generation does, or does not, begin with $a$. Then:
\[
\Pr\bigl(V(Y^{\mathrm B}) \!=\! 1 \mid c, a\bigr) - \Pr\bigl(V(Y^{\mathrm R}) \!=\! 1 \mid c\bigr) \;=\; (1 - p_{a})(s_{a} - s_{\neg a}).
\]
\end{theorem}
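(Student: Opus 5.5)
The plan is to apply the law of total probability to the restart strategy, conditioning on whether $Y^{\mathrm R}$ begins with the anchor $a$. Let $A$ denote the event that $Y^{\mathrm R}_{<|a|+1} = a$, so $\Pr(A \mid c) = p_a$. Define $s_a := \Pr(V(Y^{\mathrm R})=1 \mid A, c)$ and $s_{\neg a} := \Pr(V(Y^{\mathrm R})=1 \mid \neg A, c)$. Then
\[
\Pr\bigl(V(Y^{\mathrm R})=1 \mid c\bigr) = p_a s_a + (1-p_a) s_{\neg a}.
\]
If $p_a \in \{0,1\}$, one of these conditionals is undefined. I will adopt the convention that the corresponding term is multiplied by zero; in the case $p_a = 1$, the claimed identity then reduces to showing that the branch success probability equals $s_a$.

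The second step identifies the branch success probability with $s_a$, and this is where autoregressive prefix consistency is used. A candidate generated under branching is the concatenation $a \cdot Y^{\mathrm B}_{\ge |a|+1}$. Conditional on $A$, a restart candidate is $a \cdot Y^{\mathrm R}_{\ge |a|+1}$. Since $V$ is a deterministic function of the full sequence, and the assumption states that the suffix laws agree, the two full sequences have the same law. Hence
\[
\Pr\bigl(V(Y^{\mathrm B})=1 \mid c, a\bigr) = \Pr\bigl(V(Y^{\mathrm R})=1 \mid A, c\bigr) = s_a .
\]
This also fixes the meaning of ``$s_a$ is the success probability given that the generation begins with $a$'': the value is the same whether it is computed under restart or under branching.

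The final step is algebra:
\[
s_a - \bigl(p_a s_a + (1-p_a) s_{\neg a}\bigr) = (1-p_a)(s_a - s_{\neg a}).
\]
I expect the main obstacle to be bookkeeping rather than mathematics. First, $V(Y^{\mathrm B})$ must be read as $V$ applied to the full candidate including the forced prefix $a$, not to the suffix alone. Second, the degenerate cases $p_a \in \{0,1\}$ must be handled so that the conditional probabilities are well defined. Once these conventions are stated explicitly, the argument is a one-line decomposition.
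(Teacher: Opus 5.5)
Your proposal is correct and matches the paper's proof. Prefix consistency identifies the branching success probability with $s_a$, the law of total probability gives $p_a s_a + (1-p_a)s_{\neg a}$ for restart, and subtracting the two yields the identity; your added conventions (applying $V$ to the full sequence $a \cdot Y^{\mathrm B}_{\ge |a|+1}$, and treating the degenerate cases $p_a \in \{0,1\}$) make explicit what the paper leaves implicit.
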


\begin{proof}
By prefix consistency, the success rate of branching is exactly the success rate of restart conditioned on recovering the prefix: $\Pr(V(Y^{\mathrm B}) \!=\! 1) = s_{a}$. By the law of total probability, the unconditional success rate of restart is:
\[
\Pr(V(Y^{\mathrm R}) \!=\! 1) = p_{a} s_{a} + (1 - p_{a}) s_{\neg a}.
\]
Subtracting the second equation from the first yields the result.
\end{proof}

The theorem formally justifies the {\ttsd} branching mechanism over standard sequential refinement. Verifier feedback $f$ is trajectory-conditional: it provides correction specifically relevant to the reasoning state at the anchor prefix $a$. A restart strategy generated from scratch ($Y^{\mathrm R}$) may never naturally return to this necessary intermediate state ($p_a \ll 1$), rendering the feedback useless or misleading. 
\begin{tcolorbox}[
    enhanced,
    colback=gray!5!white,
    colframe=gray!55!black,
    boxrule=0.5pt,
    arc=1mm,
    left=2mm, right=2mm, top=1mm, bottom=1mm,
    title=\textbf{Takeaway}
]
Branching structurally guarantees that the generation is anchored exactly where the feedback's correction becomes actionable. It strictly improves over sequential restart as long as success is more likely once the correct prefix is reached ($s_a > s_{\neg a}$).
\end{tcolorbox}

\section{Experiments}
\label{sec:experiments}

In this section, we outline our experimental framework designed to evaluate the effectiveness of our proposed method, {\ttsd}. We focus our evaluation on two challenging domains demanding multi-step planning: complex mathematical reasoning and code generation. Our empirical evaluation is designed to answer the following core research questions (RQ):

\begin{itemize}[leftmargin=*, noitemsep, topsep=0pt]
    \item \textbf{RQ1 (Compute Scalability \& Generalization):} How efficiently does {\ttsd} scale inference-time compute to uncover correct trajectories compared to standard multi-turn paradigms, and does this scaling advantage generalize across environments with and without explicit verification feedback?
    \item \textbf{RQ2 (The Role of Context in Error Localization):} How strictly does accurate token-level credit assignment depend on the availability of rich environment execution signals and the retention of the historical reasoning trace?
    \item \textbf{RQ3 (Filtering Noise vs.\ Semantic Signal):} Is the null-feedback baseline subtraction strictly necessary to isolate genuine, feedback-driven model corrections from generic context-induced probability shifts?
\end{itemize}

\paragraph{Evaluation Benchmarks}
To rigorously assess the capabilities of our approach, we focus on the following two benchmark categories:
\begin{itemize}[leftmargin=*, noitemsep, topsep=0pt]
    \item \textbf{Mathematical Reasoning:} We evaluate on \textbf{AIME-25} dataset, and \textbf{HMMT-25} from the MathArena benchmark \citep{balunovic2025matharena}. Each dataset consists of 30 highly challenging, competition-level mathematics problems designed to test deep, multi-step analytical reasoning without intermediate execution feedback.
    \item \textbf{Code Generation:} We utilize \textbf{LiveCodeBench} \citep{jain2024livecodebench}, specifically a recent subset of 131 contest-style coding questions released between February and May 2025. This benchmark optionally provides rigorous execution feedback via public test cases.
\end{itemize}

We define two variants for our method based on the feedback: 
\begin{itemize}[leftmargin=*, noitemsep, topsep=0pt]
    \item \textbf{{\ttsd}-GenFB:} This uses generic feedback which includes previous answers, the full reasoning trace for the last attempt, a static error message (``Your previous attempt was unsuccessful.") and a re-attempt instruction. This is applicable for all benchmarks.
    \item \textbf{{\ttsd}-EnvFB:} The static error message in {\ttsd}-GenFB is replaced by rich environment feedback such as runtime errors or failing public test cases. This can be tested on LiveCodeBench as it provides execution based feedback, but not on maths benchmarks.
\end{itemize}


\paragraph{Implementation Details}
 We utilize the \textbf{Qwen3-8B} and \textbf{Qwen3-4B-Thinking-2507} models for our experiments. To compare inference-time scaling across methods, we report both the \emph{rollout budget} $k$ and the \emph{total number of generated tokens}. The maximum generation limit per attempt is capped at 16,384 tokens. Decoding across all methods uses a temperature $T=1$ and $\text{top}_p=1$. For {\ttsd}, we use a localized spike detection threshold of $\tau=0.06$ and maintain a maximum checkpoint depth of $D_c=16,384$. To standardize evaluation, our primary metric is \textbf{pass@$k$}, measuring whether at least one of the $k$ adaptively generated rollouts produces a correct solution.

\paragraph{Baselines}
To isolate the performance gains of {\ttsd}, we compare our method against three robust test-time scaling baselines:
\begin{enumerate}[leftmargin=*, noitemsep, topsep=0pt]
    \item \textbf{Independent Sampling:} A standard parallel generation (best-of-$k$) approach. The model independently generates $k$ full rollouts, and we evaluate whether any trajectory successfully solves the problem.
    \item \textbf{Multi-turn Refinement:} A sequential feedback-driven approach. The model is allowed a maximum of $k$ interactive turns to arrive at the correct answer, where each subsequent attempt is conditioned on the full sequence of previous attempts and their respective execution feedback.
    \item \textbf{Recursive Self-Aggregation (RSA):} A recent population-based test-time scaling algorithm that searches through a parameter space characterized by population size $N$, aggregation size $K$, and a distinct number of sequential steps $T$.
\end{enumerate}

\section{Experimental Results}
\label{sec:experimental_results}

We structure our results to directly address the research questions outlined in Section~\ref{sec:experiments}. We first analyze the compute-optimal scaling capabilities of {\ttsd} across domains (RQ1), followed by detailed ablations dissecting how structural context (RQ2) and filtering using a non-diagnostic feedback baseline (RQ3) drive its error localization precision.

\subsection{Compute-Optimal Scaling and Generalization (RQ1)}
\label{sec:results_scaling}

We evaluate {\ttsd} and our selected baselines across a varying rollout budget $k$ to analyze the scaling dynamics of pass@$k$ relative to overall token efficiency. We do not assume access to environment feedback and utilize the same generic error message as part of feedback in {\ttsd}-GenFB and Multi-Turn Refinement.

\FloatBarrier




\begin{figure*}[b]
    \centering
        \centering
        \includegraphics[width=\linewidth]{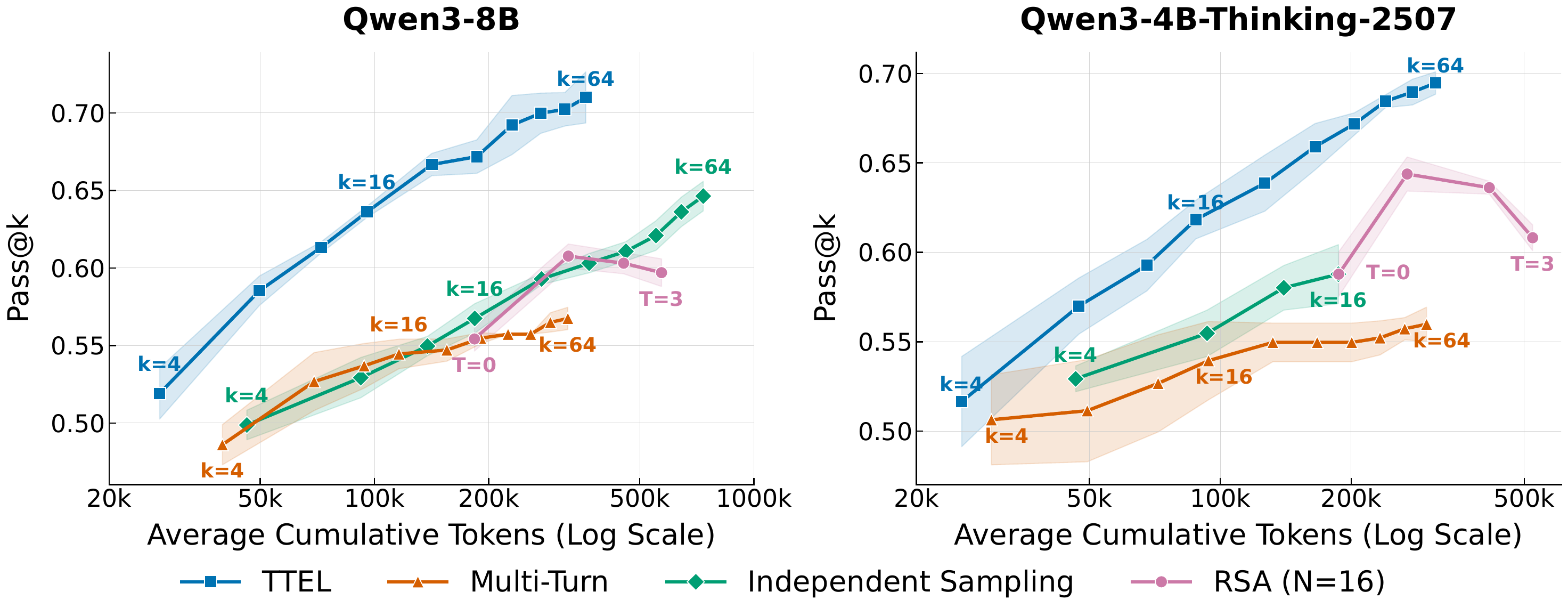}
        \caption{Pareto plot of pass@$k$ vs.\ average tokens generated per question on LiveCodeBench V6.}
        \label{fig:lcb_multi_pareto}
\end{figure*}

\begin{figure*}[t]
    \centering
        \centering
        \includegraphics[width=\linewidth]{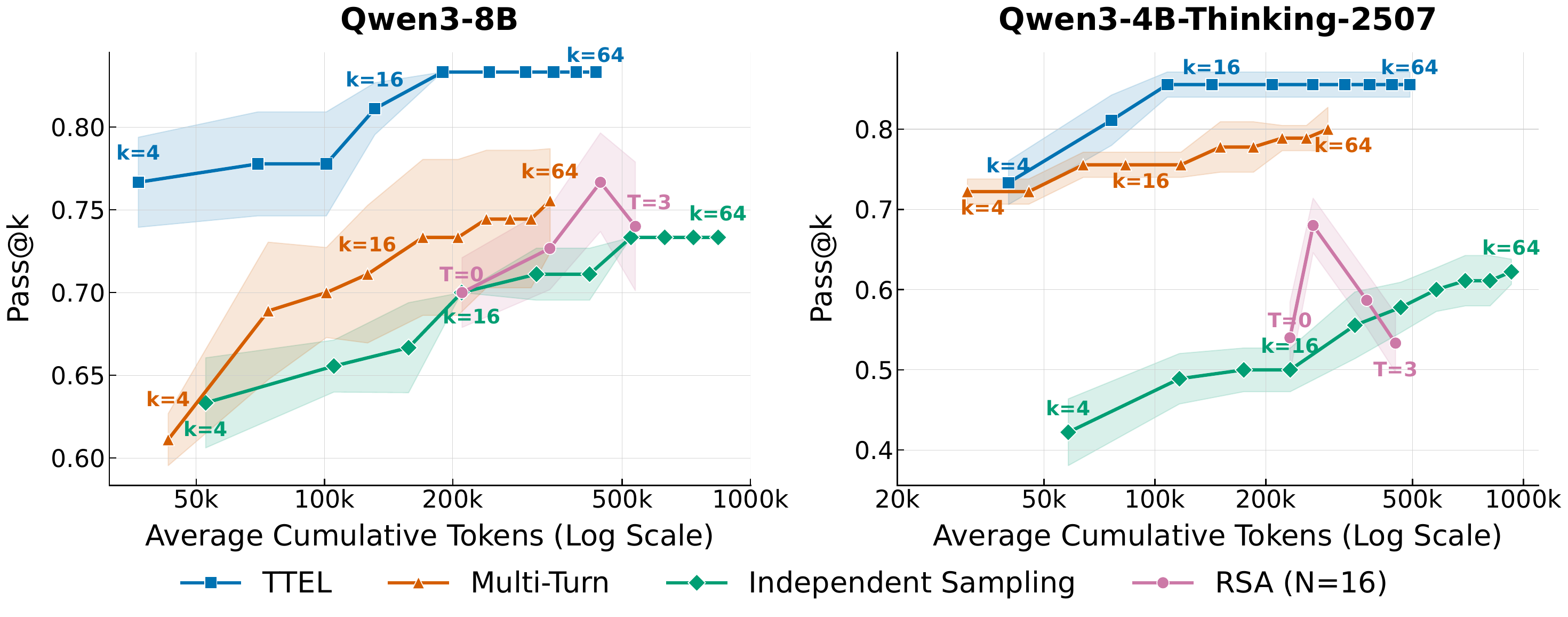}
        \caption{Pareto plot of pass@$k$ vs.\ average tokens generated per question on AIME-25.}
        \label{fig:aime_multi_pareto}
\end{figure*}

\begin{figure*}[t]
    \centering
        \centering
        \includegraphics[width=\linewidth]{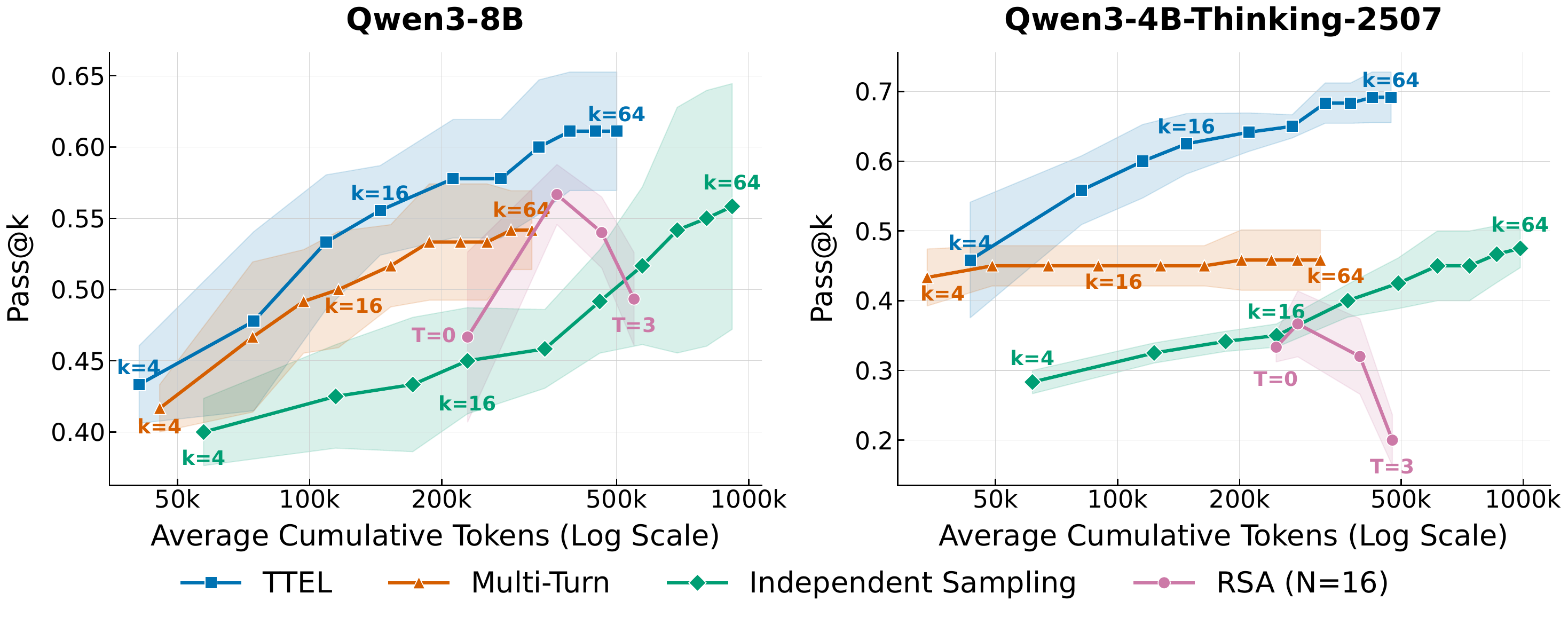}
        \caption{Pareto plot of pass@$k$ vs.\ average tokens generated per question on HMMT-25.}
        \label{fig:hmmt_multi_pareto}
\end{figure*}


\paragraph{Scaling on LiveCodeBench.}
Figure~\ref{fig:lcb_multi_pareto} illustrates the Pareto frontier of pass@$k$ relative to the average token cost per question on the LiveCodeBench V6 subset. We omit environment feedback (runtime errors, failing test cases) to demonstrate the effectiveness of {\ttsd} with just generic feedback. {\ttsd} establishes a strictly dominating Pareto frontier, achieving a superior pass@$k$ rate for any given generated-token budget. At a fixed maximum rollout budget of $k=64$, {\ttsd} achieves a pass@64 of 0.71. Notably, it outperforms both multi-turn refinement (which stagnates at 0.567) and independent sampling (0.646), while consuming less than half the average tokens of independent sampling. This compute efficiency validates that branching from retained prefixes prevents redundant regeneration. Note that the shaded region denotes the standard deviation for that point over 3 seeds.

\paragraph{Generalization to Math.}
We further evaluate scaling on the AIME-25 and HMMT-25 benchmarks, where the environment only provides terminal failure states without intermediate diagnostic feedback. Figure~\ref{fig:aime_multi_pareto} details the scaling trend on AIME-25 up to a maximum budget of $k=16$, where {\ttsd} reaches a pass@16 of 0.820, outperforming all baselines. This demonstrates that using generic feedback induces actionable token-level probability shifts, confirming that {\ttsd} operates as a domain-agnostic test-time scaling method. Note that the shaded region denotes the standard deviation for that point over 4 seeds.

\subsection{The Role of Context in Error Localization (RQ2)}
\label{sec:results_context}


\begin{figure}[htpb] 
    \centering
    \includegraphics[width=0.75\textwidth]{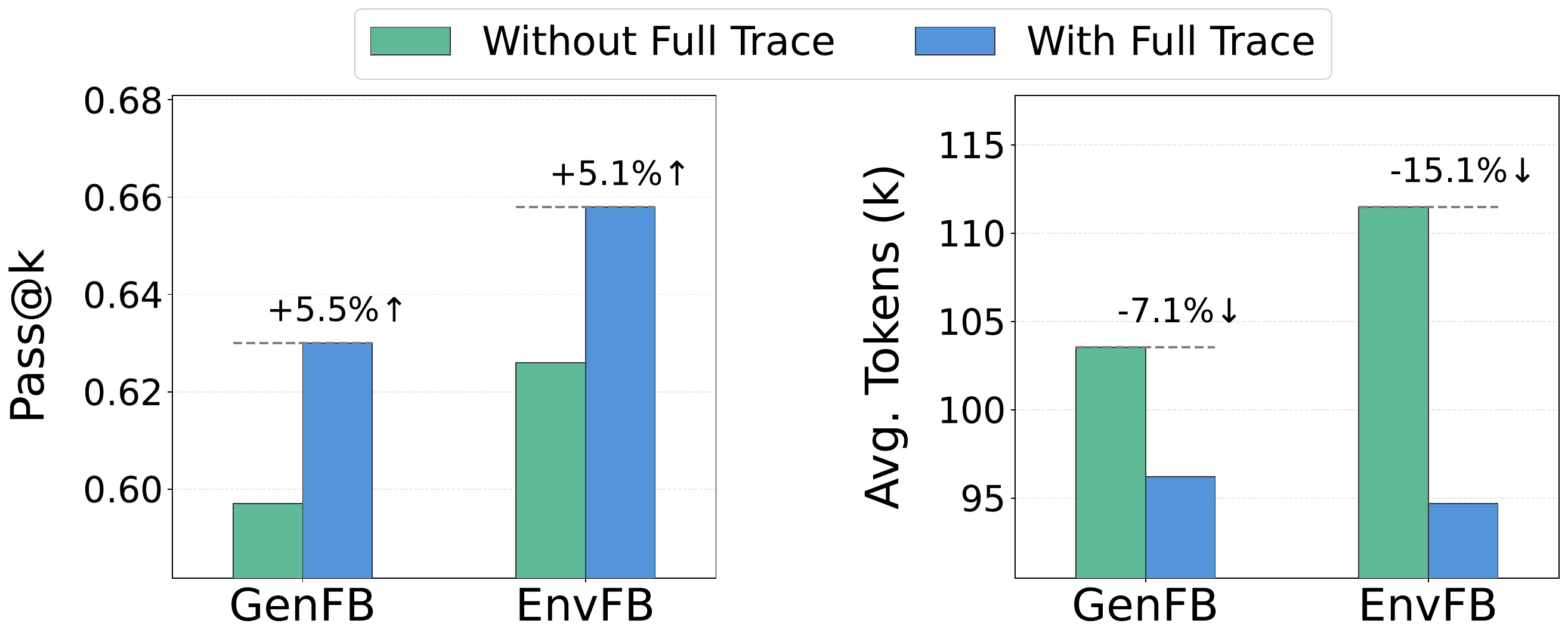}
    
    \caption{The impact of context ablation on LiveCodeBench V6 for Qwen3-8B ($k=16$). We evaluate both \ttsd-GenFB and \ttsd-EnvFB to measure the impact of informative environment feedback. To demonstrate the necessity of Full Thought Trace from the previous attempt, we include ablations that utilize only final answers from previous attempts.}

    \label{fig:ablation_context}
\end{figure}


Accurate token-level credit assignment depends on the  context provided to the teacher during spike detection. We hypothesize that retention of the original thought trace $y$, and the specificity of the external feedback $f$ can enable more precise error-localization. 

As illustrated in Figure~\ref{fig:ablation_context}, blinding the teacher to the \emph{Full Trace} generated by the student and conditioning execution failure solely on the extracted final answer causes a severe drop in pass@$k$ (from 0.630 to 0.597 in case of \emph{TTEL-GenFB}) alongside inflated token consumption. Accurate mistake localization algebraically requires the model to review the logic that actually produced the error, rendering the \emph{Full Trace} strictly necessary. Furthermore, advancing beyond generic feedback by injecting explicit execution traces aggressively grounds the probability distribution. This explicit external signal triggers far more precise log-probability divergences than implicit self-correction, lowering search cost and boosting pass@$k$ to 0.658 for \emph{TTEL-EnvFB}. Ultimately, token-efficient search relies on maximizing diagnostic context $f$ during re-scoring.

\vspace{-0.05in}
\subsection{Isolating Semantic Signals via Baseline Filtering (RQ3)}
\label{sec:results_filtering}

\vspace{-0.05in}
Finally, we validate the core architectural component of {\ttsd}: the unconditional (null) baseline filter, $\Delta_t(f_{\varnothing})$. While differences in student and teacher probabilities can identify divergence in reasoning due to genuine mistakes, the updated context for the teacher also induces a baseline probability shift. For example, the teacher may try to rephrase or abridge the reasoning trace of the student, leading to a large number of false-positive spikes that are unsuitable for diverging into novel branches. This is illustrated in  \cref{app:qualitative_example} in the Appendix.

\vspace{-0.05in}
\paragraph{Impact of Null-Baseline Spike Filtering.}
This ablation utilizes raw teacher probability spikes $\Delta_t(f)$ to trigger branching, entirely removing the secondary null-feedback baseline subtraction. As shown in Figure~\ref{fig:ablation_spike}, removing this filter dramatically changes the search dynamics. The average number of detected spikes in Turn 1 explodes from 19.3 to 486.0, representing massive false-positive mistake detections driven merely by context augmentation rather than genuine error correction. This noisy search degrades downstream pass@$k$ to 0.592. Thus, the baseline filter is strictly required to normalize the signal, filter context-induced probability shifts, and accurately localize actionable mathematical or logical errors.

\begin{figure}[H]
    \centering
    \includegraphics[width=\linewidth]{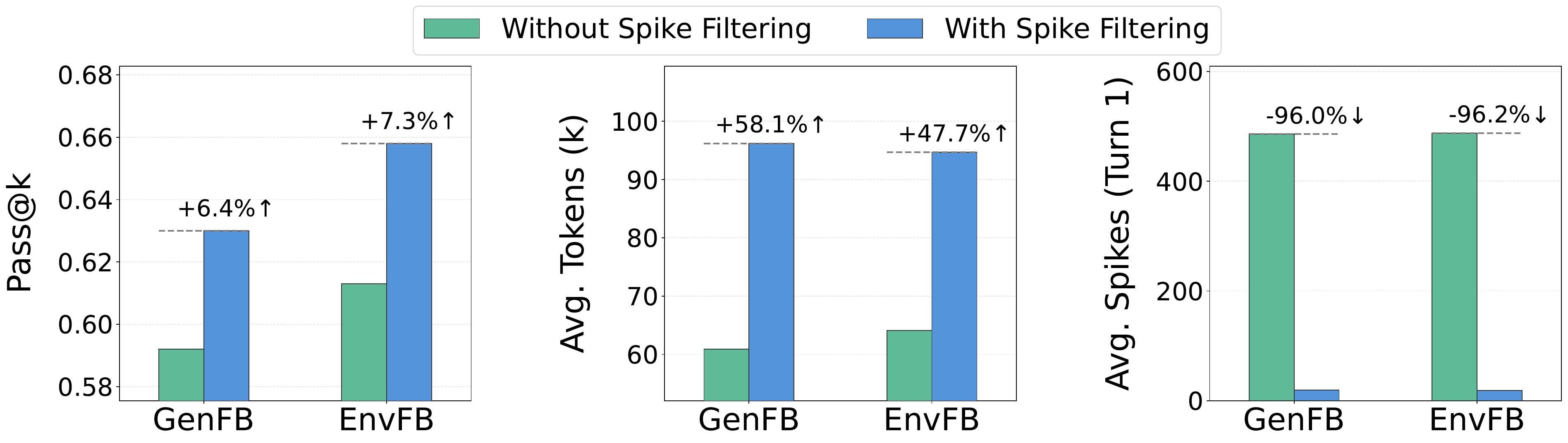}

    \caption{Null-baseline removal leads to massive false-positive spike detections and lowers accuracy on LiveCodeBench V6 for Qwen3-8B ($k=16$).\vspace{-0.1in}}
    \label{fig:ablation_spike}
\end{figure}

\vspace{-0.1in}
\paragraph{Conclusion: }
In this work, we introduced Test-Time Scaling via Error Localization ({\ttsd}), demonstrating that LLMs can perform precise token-level credit assignment at inference time without gradient updates. By coupling localized error detection directly with trajectory retention, {\ttsd} maximally reuses valid reasoning prefixes and mitigates computational redundancy.
Our results establish new compute-optimal Pareto frontiers, across math and coding tasks.




\bibliography{References}

@article{snell2024scaling,
  title={Scaling llm test-time compute optimally can be more effective than scaling model parameters},
  author={Snell, Charlie and Lee, Jaehoon and Xu, Kelvin and Kumar, Aviral},
  journal={arXiv preprint arXiv:2408.03314},
  year={2024}
}

@article{venkatraman2025recursive,
  title={Recursive self-aggregation unlocks deep thinking in large language models},
  author={Venkatraman, Siddarth and Jain, Vineet and Mittal, Sarthak and Shah, Vedant and Obando-Ceron, Johan and Bengio, Yoshua and Bartoldson, Brian R and Kailkhura, Bhavya and Lajoie, Guillaume and Berseth, Glen and others},
  journal={arXiv preprint arXiv:2509.26626},
  year={2025}
}

@article{yao2023tree,
  title={Tree of thoughts: Deliberate problem solving with large language models},
  author={Yao, Shunyu and Yu, Dian and Zhao, Jeffrey and Shafran, Izhak and Griffiths, Tom and Cao, Yuan and Narasimhan, Karthik},
  journal={Advances in neural information processing systems},
  volume={36},
  pages={11809--11822},
  year={2023}
}

@article{madaan2023self,
  title={Self-refine: Iterative refinement with self-feedback},
  author={Madaan, Aman and Tandon, Niket and Gupta, Prakhar and Hallinan, Skyler and Gao, Luyu and Wiegreffe, Sarah and Alon, Uri and Dziri, Nouha and Prabhumoye, Shrimai and Yang, Yiming and others},
  journal={Advances in neural information processing systems},
  volume={36},
  pages={46534--46594},
  year={2023}
}

@article{jain2024livecodebench,
  title={Livecodebench: Holistic and contamination free evaluation of large language models for code},
  author={Jain, Naman and Han, King and Gu, Alex and Li, Wen-Ding and Yan, Fanjia and Zhang, Tianjun and Wang, Sida and Solar-Lezama, Armando and Sen, Koushik and Stoica, Ion},
  journal={arXiv preprint arXiv:2403.07974},
  year={2024}
}

@article{hubotter2026reinforcement,
  title={Reinforcement Learning via Self-Distillation},
  author={H{\"u}botter, Jonas and L{\"u}beck, Frederike and Behric, Lejs and Baumann, Anton and Bagatella, Marco and Marta, Daniel and Hakimi, Ido and Shenfeld, Idan and Buening, Thomas Kleine and Guestrin, Carlos and others},
  journal={arXiv preprint arXiv:2601.20802},
  year={2026}
}

@inproceedings{wang2023h,
  title={H. Chi, Sharan Narang, Aakanksha Chowdhery, and Denny Zhou. Self-consistency improves chain of thought reasoning in language models},
  author={Wang, Xuezhi and Wei, Jason and Schuurmans, Dale and Le, Quoc V},
  booktitle={The Eleventh International Conference on Learning Representations},
  volume={1},
  pages={2},
  year={2023}
}

@article{liu2024rrm,
  title={Rrm: Robust reward model training mitigates reward hacking},
  author={Liu, Tianqi and Xiong, Wei and Ren, Jie and Chen, Lichang and Wu, Junru and Joshi, Rishabh and Gao, Yang and Shen, Jiaming and Qin, Zhen and Yu, Tianhe and others},
  journal={arXiv preprint arXiv:2409.13156},
  year={2024}
}

@article{srivastava2025robust,
  title={Robust Reward Modeling via Causal Rubrics},
  author={Srivastava, Pragya and Singh, Harman and Madhavan, Rahul and Patil, Gandharv and Addepalli, Sravanti and Suggala, Arun and Aravamudhan, Rengarajan and Sharma, Soumya and Laha, Anirban and Raghuveer, Aravindan and others},
  journal={arXiv preprint arXiv:2506.16507},
  year={2025}
}

@article{shao2024deepseekmath,
  title={Deepseekmath: Pushing the limits of mathematical reasoning in open language models},
  author={Shao, Zhihong and Wang, Peiyi and Zhu, Qihao and Xu, Runxin and Song, Junxiao and Bi, Xiao and Zhang, Haowei and Zhang, Mingchuan and Li, YK and Wu, Yang and others},
  journal={arXiv preprint arXiv:2402.03300},
  year={2024}
}

@inproceedings{gupta2025adaptive,
  title={Adaptive Control for Test-time Scaling},
  author={Gupta, Taneesh and Madhavan, Rahul and Tiwari, Rishabh and Zhang, Xuchao and Bansal, Chetan and Rajmohan, Saravan and Keutzer, Kurt},
  booktitle={The 5th Workshop on Mathematical Reasoning and AI at NeurIPS 2025},
  year={2025}
}

@article{chen2024not,
  title={Do not think that much for 2+ 3=? on the overthinking of o1-like llms},
  author={Chen, Xingyu and Xu, Jiahao and Liang, Tian and He, Zhiwei and Pang, Jianhui and Yu, Dian and Song, Linfeng and Liu, Qiuzhi and Zhou, Mengfei and Zhang, Zhuosheng and others},
  journal={arXiv preprint arXiv:2412.21187},
  year={2024}
}

@article{comanici2025gemini,
  title={Gemini 2.5: Pushing the frontier with advanced reasoning, multimodality, long context, and next generation agentic capabilities},
  author={Comanici, Gheorghe and Bieber, Eric and Schaekermann, Mike and Pasupat, Ice and Sachdeva, Noveen and Dhillon, Inderjit and Blistein, Marcel and Ram, Ori and Zhang, Dan and Rosen, Evan and others},
  journal={arXiv preprint arXiv:2507.06261},
  year={2025}
}

@article{luo2024improve,
  title={Improve mathematical reasoning in language models by automated process supervision},
  author={Luo, Liangchen and Liu, Yinxiao and Liu, Rosanne and Phatale, Samrat and Guo, Meiqi and Lara, Harsh and Li, Yunxuan and Shu, Lei and Zhu, Yun and Meng, Lei and others},
  journal={arXiv preprint arXiv:2406.06592},
  year={2024}
}

@inproceedings{gupta2025carmo,
  title={CARMO: Dynamic Criteria Generation for Context Aware Reward Modelling},
  author={Gupta, Taneesh and Shandilya, Shivam and Zhang, Xuchao and Madhavan, Rahul and Ghosh, Supriyo and Bansal, Chetan and Yao, Huaxiu and Rajmohan, Saravan},
  booktitle={Findings of the Association for Computational Linguistics: ACL 2025},
  pages={2202--2261},
  year={2025}
}

@article{zelikman2022star,
  title={Star: Bootstrapping reasoning with reasoning},
  author={Zelikman, Eric and Wu, Yuhuai and Mu, Jesse and Goodman, Noah},
  journal={Advances in Neural Information Processing Systems},
  volume={35},
  pages={15476--15488},
  year={2022}
}

@article{hosseini2024v,
  title={V-star: Training verifiers for self-taught reasoners},
  author={Hosseini, Arian and Yuan, Xingdi and Malkin, Nikolay and Courville, Aaron and Sordoni, Alessandro and Agarwal, Rishabh},
  journal={arXiv preprint arXiv:2402.06457},
  year={2024}
}

@inproceedings{wang2024math,
  title={Math-shepherd: Verify and reinforce llms step-by-step without human annotations},
  author={Wang, Peiyi and Li, Lei and Shao, Zhihong and Xu, Runxin and Dai, Damai and Li, Yifei and Chen, Deli and Wu, Yu and Sui, Zhifang},
  booktitle={Proceedings of the 62nd Annual Meeting of the Association for Computational Linguistics (Volume 1: Long Papers)},
  pages={9426--9439},
  year={2024}
}

@article{uesato2022solving,
  title={Solving math word problems with process-and outcome-based feedback},
  author={Uesato, Jonathan and Kushman, Nate and Kumar, Ramana and Song, Francis and Siegel, Noah and Wang, Lisa and Creswell, Antonia and Irving, Geoffrey and Higgins, Irina},
  journal={arXiv preprint arXiv:2211.14275},
  year={2022}
}

@article{chen2022program,
  title={Program of thoughts prompting: Disentangling computation from reasoning for numerical reasoning tasks},
  author={Chen, Wenhu and Ma, Xueguang and Wang, Xinyi and Cohen, William W},
  journal={arXiv preprint arXiv:2211.12588},
  year={2022}
}

@inproceedings{diao2024active,
  title={Active prompting with chain-of-thought for large language models},
  author={Diao, Shizhe and Wang, Pengcheng and Lin, Yong and Pan, Rui and Liu, Xiang and Zhang, Tong},
  booktitle={Proceedings of the 62nd Annual Meeting of the Association for Computational Linguistics (Volume 1: Long Papers)},
  pages={1330--1350},
  year={2024}
}

@inproceedings{gao2023pal,
  title={Pal: Program-aided language models},
  author={Gao, Luyu and Madaan, Aman and Zhou, Shuyan and Alon, Uri and Liu, Pengfei and Yang, Yiming and Callan, Jamie and Neubig, Graham},
  booktitle={International conference on machine learning},
  pages={10764--10799},
  year={2023},
  organization={PMLR}
}

@article{fu2022complexity,
  title={Complexity-based prompting for multi-step reasoning},
  author={Fu, Yao and Peng, Hao and Sabharwal, Ashish and Clark, Peter and Khot, Tushar},
  journal={arXiv preprint arXiv:2210.00720},
  year={2022}
}

@article{wu2024inference,
  title={Inference scaling laws: An empirical analysis of compute-optimal inference for problem-solving with language models},
  author={Wu, Yangzhen and Sun, Zhiqing and Li, Shanda and Welleck, Sean and Yang, Yiming},
  journal={arXiv preprint arXiv:2408.00724},
  year={2024}
}

@article{wei2022chain,
  title={Chain-of-thought prompting elicits reasoning in large language models},
  author={Wei, Jason and Wang, Xuezhi and Schuurmans, Dale and Bosma, Maarten and Xia, Fei and Chi, Ed and Le, Quoc V and Zhou, Denny and others},
  journal={Advances in neural information processing systems},
  volume={35},
  pages={24824--24837},
  year={2022}
}

@inproceedings{muennighoff2025s1,
  title={s1: Simple test-time scaling},
  author={Muennighoff, Niklas and Yang, Zitong and Shi, Weijia and Li, Xiang Lisa and Fei-Fei, Li and Hajishirzi, Hannaneh and Zettlemoyer, Luke and Liang, Percy and Cand{\`e}s, Emmanuel and Hashimoto, Tatsunori B},
  booktitle={Proceedings of the 2025 Conference on Empirical Methods in Natural Language Processing},
  pages={20286--20332},
  year={2025}
}

@inproceedings{lightman2023let,
  title={Let's verify step by step},
  author={Lightman, Hunter and Kosaraju, Vineet and Burda, Yuri and Edwards, Harrison and Baker, Bowen and Lee, Teddy and Leike, Jan and Schulman, John and Sutskever, Ilya and Cobbe, Karl},
  booktitle={The twelfth international conference on learning representations},
  year={2023}
}

@article{brown2024large,
  title={Large language monkeys: Scaling inference compute with repeated sampling},
  author={Brown, Bradley and Juravsky, Jordan and Ehrlich, Ryan and Clark, Ronald and Le, Quoc V and R{\'e}, Christopher and Mirhoseini, Azalia},
  journal={arXiv preprint arXiv:2407.21787},
  year={2024}
}

@article{zhao2026self,
  title={Self-Distilled Reasoner: On-Policy Self-Distillation for Large Language Models},
  author={Zhao, Siyan and Xie, Zhihui and Liu, Mengchen and Huang, Jing and Pang, Guan and Chen, Feiyu and Grover, Aditya},
  journal={arXiv preprint arXiv:2601.18734},
  year={2026}
}

@article{zhou2023language,
  title={Language agent tree search unifies reasoning acting and planning in language models},
  author={Zhou, Andy and Yan, Kai and Shlapentokh-Rothman, Michal and Wang, Haohan and Wang, Yu-Xiong},
  journal={arXiv preprint arXiv:2310.04406},
  year={2023}
}

@article{novikov2025alphaevolve,
  title={Alphaevolve: A coding agent for scientific and algorithmic discovery},
  author={Novikov, Alexander and V{\~u}, Ng{\^a}n and Eisenberger, Marvin and Dupont, Emilien and Huang, Po-Sen and Wagner, Adam Zsolt and Shirobokov, Sergey and Kozlovskii, Borislav and Ruiz, Francisco JR and Mehrabian, Abbas and others},
  journal={arXiv preprint arXiv:2506.13131},
  year={2025}
}

@article{shenfeld2026self,
  title={Self-Distillation Enables Continual Learning},
  author={Shenfeld, Idan and Damani, Mehul and H{\"u}botter, Jonas and Agrawal, Pulkit},
  journal={arXiv preprint arXiv:2601.19897},
  year={2026}
}

@article{song2026expanding,
  title={Expanding the Capabilities of Reinforcement Learning via Text Feedback},
  author={Song, Yuda and Chen, Lili and Tajwar, Fahim and Munos, Remi and Pathak, Deepak and Bagnell, J Andrew and Singh, Aarti and Zanette, Andrea},
  journal={arXiv preprint arXiv:2602.02482},
  year={2026}
}

@article{ye2026policy,
  title={On-Policy Context Distillation for Language Models},
  author={Ye, Tianzhu and Dong, Li and Wu, Xun and Huang, Shaohan and Wei, Furu},
  journal={arXiv preprint arXiv:2602.12275},
  year={2026}
}

@article{klissarov2026improving,
  title={Improving Interactive In-Context Learning from Natural Language Feedback},
  author={Klissarov, Martin and Cook, Jonathan and Antognini, Diego and Sun, Hao and Li, Jingling and Jaques, Natasha and Musat, Claudiu and Grefenstette, Edward},
  journal={arXiv preprint arXiv:2602.16066},
  year={2026}
}

@inproceedings{agarwal2024policy,
  title={On-policy distillation of language models: Learning from self-generated mistakes},
  author={Agarwal, Rishabh and Vieillard, Nino and Zhou, Yongchao and Stanczyk, Piotr and Garea, Sabela Ramos and Geist, Matthieu and Bachem, Olivier},
  booktitle={The twelfth international conference on learning representations},
  year={2024}
}

@article{kumar2024training,
  title={Training language models to self-correct via reinforcement learning},
  author={Kumar, Aviral and Zhuang, Vincent and Agarwal, Rishabh and Su, Yi and Co-Reyes, John D and Singh, Avi and Baumli, Kate and Iqbal, Shariq and Bishop, Colton and Roelofs, Rebecca and others},
  journal={arXiv preprint arXiv:2409.12917},
  year={2024}
}

@misc{deepseekai2025deepseekr1incentivizingreasoningcapability,
      title={DeepSeek-R1: Incentivizing Reasoning Capability in LLMs via Reinforcement Learning}, 
      author={DeepSeek-AI},
      year={2025},
      eprint={2501.12948},
      archivePrefix={arXiv},
      primaryClass={cs.CL},
      url={https://arxiv.org/abs/2501.12948}, 
}

@inproceedings{
balunovic2025matharena,
title={MathArena: Evaluating {LLM}s on Uncontaminated Math Competitions},
author={Mislav Balunovic and Jasper Dekoninck and Ivo Petrov and Nikola Jovanovi{\'c} and Martin Vechev},
booktitle={The Thirty-ninth Annual Conference on Neural Information Processing Systems Datasets and Benchmarks Track},
year={2025},
url={https://openreview.net/forum?id=y0zL9IZxZ7}
}
\bibliographystyle{colm2026_conference}

\newpage
\appendix

\vspace{1cm}
\hrule
\par\vspace{0.5cm}
{\Large\bfseries\centering \textsc 
{Supplementary Materials}
\par\vspace{0.5cm}}
\hrule
\vspace{0.5cm}


\section{Extended Related Work}
\label{app:related_work_extended}

\paragraph{Inference-time scaling via sampling, reranking, and verifier selection.}
The capability of large language models on complex reasoning and programming tasks scales predictably with inference-time compute \citep{snell2024scaling, brown2024large, wu2024inference, muennighoff2025s1, deepseekai2025deepseekr1incentivizingreasoningcapability}. The most straightforward and robust baseline in this family is independent parallel sampling, where massive arrays of complete candidates are generated and evaluated under specific scaling laws \citep{brown2024large}. Techniques such as pass@$k$, Best-of-$K$, majority voting, and verifier reranking \citep{lightman2023let} attempt to select the optimal solution from this generated pool. While these methods reliably convert increased test-time compute into higher accuracy, they are fundamentally limited by redundancy: they generate multiple full candidates entirely independently, failing to reuse trajectory-specific feedback or preserve valid partial progress from failed attempts.

\paragraph{Chain-of-thought prompting and structured reasoning elicitation.}
Chain-of-Thought (CoT) prompting \citep{wei2022chain} revealed that eliciting explicit intermediate reasoning steps dramatically improves language model performance on complex multi-step tasks. Self-Consistency \citep{wang2023h} extends this paradigm by sampling diverse CoT reasoning paths and aggregating via majority vote, effectively combining qualitative reasoning structure with parallel sampling. Further refinements include complexity-based prompting, which preferentially selects more elaborate reasoning chains \citep{fu2022complexity}; Program-of-Thought prompting, which externalizes computational steps to code interpreters \citep{chen2022program, gao2023pal}; and active prompting, which strategically selects the most informative exemplars to elicit stronger chains \citep{diao2024active}. These methods collectively establish that the \emph{structure} and \emph{quality} of reasoning is often the critical determinant of test-time performance. {\ttsd} deepens this insight: rather than eliciting better reasoning through prompt engineering, it uses a feedback-conditioned token-level signal to diagnose precisely \emph{where} reasoning quality degrades and surgically branches from that exact locus.

\paragraph{Sequential refinement and interactive feedback at inference time.}
To address the inefficiency of independent sampling, sequential refinement methods prepend execution feedback to prompt the model to iteratively correct its own errors \citep{madaan2023self}. Recent work investigates interactive in-context learning from natural language feedback as a distinctly trainable capability \citep{klissarov2026improving}. While these approaches successfully leverage environmental critiques, the setting remains fundamentally multi-turn in-context learning. Treating trajectory-specific feedback as a global instruction for a sequential retry forces the model to regenerate the reasoning chain from scratch, frequently resulting in context bloat, hallucination, or the repetition of earlier logical errors \citep{kumar2024training}. {\ttsd} mitigates this by using feedback not for an unanchored sequential retry, but for prefix-anchored branching.

\paragraph{Tree search and structured exploration over reasoning traces.}
To overcome the limitations of sequential generation, backtracking algorithms such as Tree of Thoughts \citep{yao2023tree} and Language Agent Tree Search \citep{zhou2023language} explicitly structure inference as a search space over partial reasoning trajectories. Population-based and evolutionary methods, including Recursive Self-Aggregation \citep{venkatraman2025recursive} and AlphaEvolve \citep{novikov2025alphaevolve}, scale this by iteratively mutating and recombining successful reasoning paths to discover complex solutions. These methods establish language model inference as a formal search procedure. Concurrently, inference-time control algorithms explicitly optimize computational budgets utilizing explicit constraints, either by deploying verifiable testing mechanisms and adaptive early-stopping criteria to dynamically bound test-time horizons \citep{gupta2025adaptive}, or by applying simpler sequential budget-forcing sequences \citep{muennighoff2025s1}. However, a primary computational bottleneck across these tree-search and explicit control frameworks remains their reliance on manual, coarse-grained chunking, heuristic bounds, or whole-trajectory mutation: they divide generation into heuristically defined ``steps'' and evaluate states using external value functions. {\ttsd} is fundamentally a tree-search method, but one whose branching rule emerges organically from feedback-conditioned token disagreement rather than hand-designed programmatic search heuristics or blind mutation.

\paragraph{Verifier-guided reasoning, execution feedback, and program repair.}
In domains such as competitive programming and mathematical reasoning, execution environments provide rich diagnostic feedback, including runtime errors and public test case outputs \citep{jain2024livecodebench}. Previous literature in automated program repair and verifier-guided generation uses this execution feedback primarily to score, rerank, or prompt complete revisions of generated candidates. A key contrast of our work is that {\ttsd} utilizes verifier feedback not merely to evaluate candidates, but to explicitly decide \emph{where} to truncate the trajectory and branch a new attempt.

\paragraph{Process reward models and step-level credit assignment.}
Outcome reward models assign a single scalar to complete solutions, limiting their capacity for localizing errors within multi-step reasoning traces. Process reward models (PRMs) address this gap by providing step-level supervision, either through human annotation \citep{lightman2023let, uesato2022solving} or automated Monte Carlo estimation procedures such as Math-Shepherd \citep{wang2024math} and OmegaPRM \citep{luo2024improve}. Causal reward modelling techniques such as CROME \citep{srivastava2025robust}, RRM \citep{liu2024rrm} and CARMO \citep{gupta2025carmo} also attempt to address credit assignment through causal perturbations. The key limitation is that PRMs require training, introducing additional annotation costs and potential distribution mismatch when applied to new domains or model families. {\ttsd} achieves analogous credit assignment, but derives its signal purely from the probabilistic disagreement between the base and feedback-conditioned policies, requiring no externally trained reward model.

\paragraph{On-policy distillation and learning from self-generated trajectories.}
Because environment verifiers typically provide only terminal outcomes, minimizing the distribution mismatch between training and inference requires on-policy algorithms. Generalized Knowledge Distillation (GKD) trains the student model on its own generated output sequences while leveraging feedback from the teacher, preventing off-policy divergence \citep{agarwal2024policy}. Similarly, On-Policy Self-Distillation (OPSD) utilizes a single model acting as both teacher (conditioned on privileged verified reasoning traces) and student, minimizing the divergence over the student's own rollouts \citep{zhao2026self}. These highly relevant methods learn directly from the model's own trajectories, but they internalize the signal into model parameters through offline training, rather than dynamically guiding inference-time branching on the current problem instance.

\paragraph{Self-distillation and RL from rich textual feedback.}
Recent advancements have formalized reinforcement learning protocols capable of exploiting rich, textual environment feedback to overcome the scalar credit assignment bottleneck. Self-Distillation Policy Optimization (SDPO) converts tokenized feedback into a dense learning signal by treating the feedback-conditioned model as a self-teacher, distilling its next-token predictions back into the policy \citep{hubotter2026reinforcement}. Concurrently, RLTF explores self-distillation and feedback-modeling objectives to internalize multi-turn textual critique for improved single-turn test-time performance \citep{song2026expanding}. Conceptually, these works are the closest to our approach, as they utilize textual critique for dense token-level credit assignment. However, they primarily improve the policy through offline training. {\ttsd} demonstrates that similar token-level divergence signals can be repurposed to guide search organically \emph{without} gradient updates.

\paragraph{Training for extended reasoning via reinforcement learning.}
A parallel line of work focuses on training models to produce longer, more deliberate reasoning traces. STaR \citep{zelikman2022star} and V-STaR \citep{hosseini2024v} bootstrap reasoning ability by iteratively fine-tuning on self-generated correct rationales. At larger scale, Gemini 2.5 \citep{comanici2025gemini} employs reinforcement learning to produce extended internal chains of thought, while DeepSeek-R1 \citep{deepseekai2025deepseekr1incentivizingreasoningcapability} demonstrates that RL with verifier rewards can elicit sophisticated emergent behaviors including self-verification and backtracking. GRPO \citep{shao2024deepseekmath} and related group-relative policy optimization methods reduce the need for explicit critic models in this setting. These training-time investments produce reasoning models whose outputs are naturally amenable to test-time scaling interventions. {\ttsd} is complementary to this paradigm: it operates purely at inference time on any model capable of generating reasoning traces, requiring no additional training or parameter updates, and can be composed with RL-trained reasoning models to further improve their test-time search efficiency.

\paragraph{Context distillation and internalizing auxiliary information into weights.}
Context distillation methods aim to convert useful but cumbersome context, such as optimized system prompts, retrieval augmentations, or historical traces, directly into model parameters. For instance, On-Policy Context Distillation (OPCD) bridges on-policy learning with context distillation by training student models on generated trajectories to match a context-conditioned teacher \citep{ye2026policy}. {\ttsd} is related to this paradigm in spirit, as it also leverages the auxiliary information provided by prior attempts and environment feedback. However, rather than absorbing that information offline to permanently update weights, {\ttsd} uses the auxiliary feedback online to evaluate and anchor the search at a securely retained prefix.

\paragraph{Self-distillation beyond post-training: demonstrations and continual learning.}
The utility of on-policy self-distillation extends beyond reasoning verification. Self-Distillation Fine-Tuning (SDFT) establishes it as a powerful recipe for continual learning from demonstrations, leveraging a demonstration-conditioned model as an on-policy teacher to acquire new skills while avoiding catastrophic forgetting \citep{shenfeld2026self}. While less directly focused on verifier-guided reasoning, SDFT highlights that context-conditioned self-teaching signals are part of a broader, highly effective algorithmic trend. {\ttsd} positions itself within this trend, uniquely distinct by deploying the self-teaching signal purely at inference time.

\paragraph{Overthinking, reasoning efficiency, and adaptive compute allocation.}
As reasoning models generate increasingly long chains of thought, the problem of \emph{overthinking} expending substantial compute on problems solvable with far less deliberation has emerged as a practical concern \citep{chen2024not, gupta2025adaptive}. Adaptive compute allocation methods seek to match reasoning effort to problem difficulty, for instance through early-exit mechanisms, confidence-based stopping criteria, or learned allocation policies. {\ttsd}'s spike-based branching mechanism naturally addresses this inefficiency: by localizing errors to specific token positions via feedback-conditioned disagreement, it preserves demonstrably correct reasoning prefixes and avoids redundant regeneration of already-valid computation. This prefix-anchored branching strategy ensures that test-time compute is concentrated precisely where reasoning fails, rather than distributed uniformly or wastefully across the full trajectory.

\paragraph{Summary of Positioning.}
In summary, {\ttsd} lies at the intersection of verifier-guided inference-time scaling, token-level self-distillation-style credit assignment, and structured tree search. Its distinguishing conceptual feature is that it couples feedback-conditioned credit assignment with retained-prefix branching on the exact failed trajectory. By doing so, it transcends the limitations of methods that utilize feedback exclusively for outcome reranking, unanchored multi-turn retries, or static offline parameter updates.

\section{Experimental Configuration and Prompt Templates}
\label{app:experimental_config}

\paragraph{Generation Settings}
We experiment with the \textbf{Qwen3-8B} and \textbf{Qwen3-4B-Thinking-2507} models. The maximum generation limit per attempt is capped at 16,384 tokens. Decoding across all methods uses a temperature $T=1$ and $\text{top}_p=1$. 

\paragraph{Test-Time Scaling via Error Localization}
For {\ttsd}, we use a localized spike detection threshold of $\tau=0.06$ and $\tau_B=0.06$ for the null-baseline. We maintain a maximum checkpoint depth of $D_c=16,384$.

At the first turn or after a restart, the user prompt is the problem statement.
For further turns, the user prompt also includes a history of prior answers, respective feedback and the reasoning trace for the final attempt. 

\begin{tcolorbox}[colback=gray!5, colframe=gray!50, fontupper=\small, title=Student Prompt (With History)]
\texttt{\{question\}}\\[4pt]
\texttt{\#\#\# Attempt:}\\
\texttt{The following is a previous attempt to solve the question:}\\[2pt]
\texttt{\{answer$_{1}$\}} \hfill\textit{\small(answer-only for earlier turns)}\\[4pt]
\texttt{\#\#\# Feedback:}\\
\texttt{The following is feedback from your unsuccessful earlier attempt:}\\[2pt]
\texttt{\{feedback$_{1}$\}}\\[2pt]
\texttt{Solve the original question. Use the feedback to make changes.}\\[6pt]
$\vdots$ \hfill\textit{\small(repeat for turns $2 \ldots t{-}1$)}\\[6pt]
\texttt{\#\#\# Attempt:}\\
\texttt{The following is a previous attempt to solve the question:}\\[2pt]
\texttt{\{full\_trace$_{t-1}$\}} \hfill\textit{\small(full reasoning+answer for last turn only)}\\[4pt]
\texttt{\#\#\# Feedback:}\\
\texttt{The following is feedback from your unsuccessful earlier attempt:}\\[2pt]
\texttt{\{feedback$_{t-1}$\}}\\[2pt]
\texttt{Solve the original question. Use the feedback to make changes.}
\end{tcolorbox}

Teacher Prompt uses the same format but it includes information from the latest rollout generated by the student. Teacher Prompt at Turn~t becomes Student Prompt at Turn~t+1.

Spike-Baseline uses the Teacher Prompt but replaces the instruction 
``Solve the original question. Use the feedback to make changes." with
``Repeat the previous attempt word by word, but skip portions of redundant thinking."

\paragraph{Independent Sampling}
$k$ independent rollouts are sampled from a single prompt. The prompt is simply the problem statement.

\begin{tcolorbox}[colback=gray!5, colframe=gray!50, fontupper=\small, title=Independent Sampling Prompt]
\texttt{\{question\}}
\end{tcolorbox}

\paragraph{Multi-Turn Refinement}
This method carries out sequential refinement of answers by conditioning on a history of prior answers and feedback.

\begin{tcolorbox}[colback=gray!5, colframe=gray!50, fontupper=\small, title=Multi-Turn prompt (turn $t > 0$)]
\texttt{\{question\}}\\[4pt]
\texttt{\#\#\# Attempt:}\\
\texttt{The following is a previous attempt to solve the question:}\\[2pt]
\texttt{\{non\_think\_response$_{i}$\}} \hfill\textit{\small(answer-only for all turns)}\\[4pt]
\texttt{\#\#\# Feedback:}\\
\texttt{The following is feedback from your unsuccessful earlier attempt:}\\[2pt]
\texttt{\{feedback$_{i}$\}}\\[2pt]
\texttt{Solve the original question. Use the feedback to make changes.}\\[6pt]
$\vdots$ \hfill\textit{\small(repeat for all previous turns)}
\end{tcolorbox}

\paragraph{Recursive Self-Aggregation (RSA)}
We run RSA with a population of $N=16$ over $T=4$ turns and aggregation size $K=4$. The pass@$k$ values shown in plots are calculated over the $k=16$ rollouts present in the population after a turn T.
At Turn~0, $N$ independent rollouts are generated from the question prompt.
At Turns~1 to 3, $K$ candidate answers are sampled from the current population and presented to the model:

\begin{tcolorbox}[colback=gray!5, colframe=gray!50, fontupper=\small, title=RSA aggregation prompt ($K > 1$)]
\texttt{You are given a \{problem\_kind\} and several candidate solutions. Some candidates may be incorrect or contain errors. Aggregate the useful ideas and produce a single, high-quality solution. Reason carefully; if candidates disagree, choose the correct path. If all are incorrect, then attempt a different strategy. End with the final result in \{format\_hint\}.}\\[4pt]
\texttt{Problem:}\\
\texttt{\{question\}}\\[4pt]
\texttt{Candidate solutions (may contain mistakes):}\\[2pt]
\texttt{---- Solution 1 ----}\\
\texttt{\{answer$_1$\}}\\[2pt]
$\vdots$\\[2pt]
\texttt{---- Solution $K$ ----}\\
\texttt{\{answer$_K$\}}\\[4pt]
\texttt{Now write a single improved solution. Provide clear reasoning and end with the final answer in \{format\_hint\}.}
\end{tcolorbox}

The \texttt{format\_hint} adapts by domain. For example, \verb|\boxed{}| for math.




\section{Detailed Results for Scaling Experiments}
\label{app:detailed_results_scaling}

\noindent
This section provides detailed results and error analysis for scaling experiments with a varying rollout budget $k$. Experiments are conducted over 2 models, 3 datasets and repeated over different seeds, denoted by $s$.

AIME-25 and HMMT-25 have 30 questions each as opposed to LCBv6 with 131 questions. We evaluate the latter with 3 seeds and the former with 4 seeds to account for higher variance.

The reported token count is the average number of tokens generated per question. For RSA, pass@$k$ evaluation is over for a constant $k=16$, which is the population of $N=16$ rollouts generated after turn $i$ (denoted by $t_i$). All other methods scale up to a maximum budget of $k=64$.

\begin{table}[bh]
\centering
\small
\setlength{\tabcolsep}{4pt}
\begin{tabular}{l c crrr crrr}
\toprule
 &  & \multicolumn{4}{c}{\textbf{pass@$k$ (\%)}} & \multicolumn{4}{c}{\textbf{Generated Tokens ($\times 10^3$)}} \\
\cmidrule(lr){3-6}\cmidrule(lr){7-10}
Method & $k$ & Mean $\pm$ SD & $s$=0 & $s$=1 & $s$=2 & Mean $\pm$ SD & $s$=0 & $s$=1 & $s$=2 \\
\midrule
\multirow{10}{*}{TTEL} & 4 & $51.9 \pm 1.6$ & 54.2 & 51.1 & 50.4 & $27.2 \pm 0.5$ & 27.3 & 26.6 & 27.7 \\
 & 8 & $58.5 \pm 1.0$ & 58.8 & 57.3 & 59.5 & $49.7 \pm 0.4$ & 50.2 & 49.3 & 49.7 \\
 & 12 & $61.3 \pm 0.4$ & 61.8 & 61.1 & 61.1 & $72.4 \pm 0.2$ & 72.6 & 72.4 & 72.0 \\
 & 16 & $63.6 \pm 0.4$ & 63.4 & 64.1 & 63.4 & $95.6 \pm 0.2$ & 95.4 & 95.8 & 95.6 \\
 & 24 & $66.7 \pm 0.7$ & 67.2 & 65.6 & 67.2 & $141.7 \pm 0.9$ & 140.5 & 142.7 & 141.8 \\
 & 32 & $67.2 \pm 1.1$ & 67.9 & 65.6 & 67.9 & $186.0 \pm 0.6$ & 186.4 & 185.1 & 186.4 \\
 & 40 & $69.2 \pm 1.9$ & 71.8 & 67.2 & 68.7 & $230.3 \pm 0.3$ & 230.3 & 230.0 & 230.6 \\
 & 48 & $70.0 \pm 1.3$ & 71.8 & 69.5 & 68.7 & $274.1 \pm 0.8$ & 274.4 & 273.1 & 274.9 \\
 & 56 & $70.2 \pm 1.1$ & 71.8 & 69.5 & 69.5 & $317.6 \pm 0.4$ & 317.8 & 317.0 & 317.9 \\
 & 64 & $71.0 \pm 1.6$ & 73.3 & 70.2 & 69.5 & $360.4 \pm 0.7$ & 361.3 & 360.4 & 359.6 \\
\midrule
\multirow{10}{*}{\shortstack[l]{Independent\\Sampling}} & 4 & $49.9 \pm 1.0$ & 49.6 & 48.9 & 51.1 & $46.1 \pm 0.1$ & 46.1 & 46.2 & 46.0 \\
 & 8 & $52.9 \pm 1.3$ & 54.2 & 51.1 & 53.4 & $92.0 \pm 0.3$ & 92.0 & 92.4 & 91.8 \\
 & 12 & $55.0 \pm 0.6$ & 55.0 & 54.2 & 55.7 & $137.9 \pm 0.3$ & 137.9 & 138.2 & 137.5 \\
 & 16 & $56.7 \pm 1.0$ & 56.5 & 55.7 & 58.0 & $183.7 \pm 0.4$ & 184.0 & 183.9 & 183.2 \\
 & 24 & $59.3 \pm 0.4$ & 58.8 & 59.5 & 59.5 & $275.6 \pm 0.4$ & 275.8 & 276.0 & 275.0 \\
 & 32 & $60.3 \pm 0.6$ & 59.5 & 60.3 & 61.1 & $367.7 \pm 0.6$ & 368.4 & 367.8 & 366.8 \\
 & 40 & $61.1 \pm 0.6$ & 60.3 & 61.1 & 61.8 & $459.7 \pm 0.8$ & 460.8 & 459.4 & 458.9 \\
 & 48 & $62.1 \pm 1.0$ & 61.8 & 61.1 & 63.4 & $551.6 \pm 1.0$ & 553.0 & 551.1 & 550.8 \\
 & 56 & $63.6 \pm 1.0$ & 64.9 & 62.6 & 63.4 & $643.5 \pm 0.8$ & 644.6 & 643.1 & 642.8 \\
 & 64 & $64.6 \pm 1.0$ & 64.9 & 63.4 & 65.6 & $735.0 \pm 1.1$ & 736.5 & 734.6 & 734.0 \\
\midrule
\multirow{10}{*}{\shortstack[l]{Multi-Turn\\Refinement}} & 4 & $48.6 \pm 1.3$ & 47.3 & 50.4 & 48.1 & $39.7 \pm 0.1$ & 39.8 & 39.6 & 39.8 \\
 & 8 & $52.7 \pm 1.9$ & 52.7 & 55.0 & 50.4 & $69.3 \pm 0.6$ & 69.5 & 68.6 & 69.9 \\
 & 12 & $53.7 \pm 1.4$ & 52.7 & 55.7 & 52.7 & $93.9 \pm 1.0$ & 94.2 & 92.7 & 95.0 \\
 & 16 & $54.5 \pm 1.0$ & 53.4 & 55.7 & 54.2 & $116.0 \pm 1.1$ & 116.2 & 114.6 & 117.2 \\
 & 24 & $54.7 \pm 0.7$ & 54.2 & 55.7 & 54.2 & $155.1 \pm 1.1$ & 155.4 & 153.6 & 156.3 \\
 & 32 & $55.5 \pm 0.4$ & 55.0 & 55.7 & 55.7 & $190.6 \pm 1.8$ & 190.8 & 188.3 & 192.6 \\
 & 40 & $55.7 \pm 0.0$ & 55.7 & 55.7 & 55.7 & $224.9 \pm 2.6$ & 225.6 & 221.4 & 227.6 \\
 & 48 & $55.7 \pm 0.0$ & 55.7 & 55.7 & 55.7 & $257.9 \pm 2.7$ & 259.1 & 254.2 & 260.6 \\
 & 56 & $56.5 \pm 0.6$ & 56.5 & 55.7 & 57.3 & $290.7 \pm 2.9$ & 291.9 & 286.7 & 293.4 \\
 & 64 & $56.7 \pm 0.7$ & 57.3 & 55.7 & 57.3 & $323.2 \pm 2.7$ & 324.8 & 319.4 & 325.4 \\
\midrule
\multirow{4}{*}{RSA} & 16 ($t_{0}$) & $55.0 \pm 0.6$ & 54.2 & 55.7 & 55.0 & $183.4 \pm 0.3$ & 183.3 & 183.1 & 183.8 \\
 & 16 ($t_{1}$) & $60.3 \pm 0.6$ & 59.5 & 61.1 & 60.3 & $324.4 \pm 0.3$ & 324.2 & 324.2 & 324.8 \\
 & 16 ($t_{2}$) & $60.3 \pm 0.6$ & 60.3 & 59.5 & 61.1 & $453.7 \pm 1.3$ & 452.1 & 453.9 & 455.2 \\
 & 16 ($t_{3}$) & $60.1 \pm 0.4$ & 60.3 & 59.5 & 60.3 & $570.7 \pm 1.8$ & 568.7 & 570.3 & 573.1 \\
\bottomrule
\end{tabular}
\vspace{1mm}
\caption{Inference scaling and error analysis for Qwen3-8B on LiveCodeBench (LCB). We evaluate pass@$k$ (\%) and average generation cost per question over 3 different seeds.}
\label{tab:detailed_performance_S2_Qwen3-8B_k64_lcb}
\end{table}

\begin{table}[hp]
\centering
\small
\setlength{\tabcolsep}{4pt}
\begin{tabular}{l c crrrr crrrr}
\toprule
 &  & \multicolumn{5}{c}{\textbf{pass@$k$ (\%)}} & \multicolumn{5}{c}{\textbf{Generated Tokens ($\times 10^3$)}} \\
\cmidrule(lr){3-7}\cmidrule(lr){8-12}
Method & $k$ & Mean $\pm$ SD & $s$=0 & $s$=1 & $s$=2 & $s$=3 & Mean $\pm$ SD & $s$=0 & $s$=1 & $s$=2 & $s$=3 \\
\midrule
\multirow{10}{*}{TTEL} & 4 & $75.0 \pm 3.7$ & 76.7 & 80.0 & 73.3 & 70.0 & $36.6 \pm 0.7$ & 35.5 & 37.6 & 36.6 & 36.6 \\
 & 8 & $78.3 \pm 2.9$ & 80.0 & 80.0 & 73.3 & 80.0 & $70.0 \pm 0.5$ & 69.9 & 70.4 & 69.1 & 70.5 \\
 & 12 & $79.2 \pm 3.6$ & 80.0 & 80.0 & 73.3 & 83.3 & $101.5 \pm 1.1$ & 99.8 & 101.6 & 101.6 & 102.9 \\
 & 16 & $81.7 \pm 1.7$ & 83.3 & 80.0 & 80.0 & 83.3 & $132.4 \pm 2.4$ & 129.7 & 130.5 & 133.8 & 135.6 \\
 & 24 & $83.3 \pm 0.0$ & 83.3 & 83.3 & 83.3 & 83.3 & $192.4 \pm 5.1$ & 189.5 & 189.1 & 189.8 & 201.3 \\
 & 32 & $83.3 \pm 0.0$ & 83.3 & 83.3 & 83.3 & 83.3 & $248.2 \pm 7.8$ & 246.1 & 243.2 & 242.1 & 261.5 \\
 & 40 & $83.3 \pm 0.0$ & 83.3 & 83.3 & 83.3 & 83.3 & $301.3 \pm 10.6$ & 306.3 & 295.2 & 288.0 & 315.7 \\
 & 48 & $83.3 \pm 0.0$ & 83.3 & 83.3 & 83.3 & 83.3 & $351.2 \pm 15.2$ & 361.3 & 343.8 & 330.2 & 369.4 \\
 & 56 & $83.3 \pm 0.0$ & 83.3 & 83.3 & 83.3 & 83.3 & $396.3 \pm 16.2$ & 406.9 & 392.2 & 372.0 & 414.3 \\
 & 64 & $83.3 \pm 0.0$ & 83.3 & 83.3 & 83.3 & 83.3 & $439.5 \pm 18.0$ & 452.5 & 437.0 & 411.0 & 457.3 \\
\midrule
\multirow{10}{*}{\shortstack[l]{Independent\\Sampling}} & 4 & $64.2 \pm 2.8$ & 60.0 & 66.7 & 63.3 & 66.7 & $52.7 \pm 0.6$ & 52.5 & 51.9 & 53.5 & 52.8 \\
 & 8 & $65.8 \pm 1.4$ & 66.7 & 66.7 & 63.3 & 66.7 & $104.9 \pm 0.8$ & 105.2 & 104.6 & 106.0 & 104.0 \\
 & 12 & $67.5 \pm 2.8$ & 70.0 & 66.7 & 63.3 & 70.0 & $157.3 \pm 0.8$ & 157.2 & 156.9 & 158.7 & 156.6 \\
 & 16 & $70.0 \pm 0.0$ & 70.0 & 70.0 & 70.0 & 70.0 & $209.7 \pm 0.8$ & 209.8 & 209.2 & 211.0 & 209.0 \\
 & 24 & $70.8 \pm 1.4$ & 70.0 & 70.0 & 73.3 & 70.0 & $314.3 \pm 1.0$ & 314.4 & 313.6 & 315.9 & 313.4 \\
 & 32 & $70.8 \pm 1.4$ & 70.0 & 70.0 & 73.3 & 70.0 & $418.8 \pm 1.1$ & 419.0 & 418.0 & 420.5 & 417.8 \\
 & 40 & $72.5 \pm 1.4$ & 73.3 & 73.3 & 73.3 & 70.0 & $524.1 \pm 0.9$ & 524.5 & 522.9 & 525.3 & 523.6 \\
 & 48 & $73.3 \pm 0.0$ & 73.3 & 73.3 & 73.3 & 73.3 & $629.2 \pm 0.7$ & 629.7 & 628.0 & 629.9 & 629.2 \\
 & 56 & $73.3 \pm 0.0$ & 73.3 & 73.3 & 73.3 & 73.3 & $734.8 \pm 0.9$ & 734.6 & 733.6 & 736.0 & 735.2 \\
 & 64 & $73.3 \pm 0.0$ & 73.3 & 73.3 & 73.3 & 73.3 & $840.1 \pm 1.0$ & 839.3 & 839.1 & 841.5 & 840.5 \\
\midrule
\multirow{10}{*}{\shortstack[l]{Multi-Turn\\Refinement}} & 4 & $62.5 \pm 2.8$ & 63.3 & 60.0 & 60.0 & 66.7 & $42.5 \pm 0.8$ & 42.6 & 43.4 & 43.0 & 41.1 \\
 & 8 & $70.8 \pm 4.9$ & 73.3 & 63.3 & 70.0 & 76.7 & $73.4 \pm 1.5$ & 72.8 & 75.9 & 72.7 & 72.0 \\
 & 12 & $72.5 \pm 4.9$ & 73.3 & 66.7 & 70.0 & 80.0 & $100.4 \pm 2.3$ & 101.1 & 103.8 & 98.3 & 98.3 \\
 & 16 & $73.3 \pm 5.3$ & 76.7 & 66.7 & 70.0 & 80.0 & $125.7 \pm 3.4$ & 127.2 & 130.4 & 121.2 & 124.1 \\
 & 24 & $75.0 \pm 5.0$ & 80.0 & 70.0 & 70.0 & 80.0 & $170.5 \pm 4.2$ & 172.5 & 174.8 & 163.5 & 171.1 \\
 & 32 & $75.0 \pm 5.0$ & 80.0 & 70.0 & 70.0 & 80.0 & $207.4 \pm 3.9$ & 206.1 & 209.8 & 201.7 & 211.9 \\
 & 40 & $75.8 \pm 4.3$ & 80.0 & 73.3 & 70.0 & 80.0 & $242.6 \pm 5.7$ & 237.6 & 245.6 & 236.7 & 250.5 \\
 & 48 & $75.8 \pm 4.3$ & 80.0 & 73.3 & 70.0 & 80.0 & $275.8 \pm 6.6$ & 269.4 & 279.0 & 269.8 & 285.1 \\
 & 56 & $75.8 \pm 4.3$ & 80.0 & 73.3 & 70.0 & 80.0 & $308.8 \pm 7.0$ & 301.1 & 312.5 & 303.1 & 318.4 \\
 & 64 & $76.7 \pm 3.3$ & 80.0 & 73.3 & 73.3 & 80.0 & $341.9 \pm 7.1$ & 333.2 & 345.5 & 337.3 & 351.5 \\
\midrule
\multirow{4}{*}{RSA} & 16 ($t_{0}$) & $70.0 \pm 2.4$ & 73.3 & 70.0 & 70.0 & 66.7 & $210.3 \pm 1.0$ & 211.5 & 210.9 & 208.8 & 210.0 \\
 & 16 ($t_{1}$) & $72.5 \pm 2.8$ & 73.3 & 70.0 & 70.0 & 76.7 & $338.2 \pm 2.0$ & 341.0 & 338.6 & 335.6 & 337.6 \\
 & 16 ($t_{2}$) & $77.5 \pm 2.8$ & 80.0 & 76.7 & 73.3 & 80.0 & $445.6 \pm 4.2$ & 452.9 & 442.6 & 442.8 & 444.1 \\
 & 16 ($t_{3}$) & $75.0 \pm 3.7$ & 80.0 & 73.3 & 70.0 & 76.7 & $538.4 \pm 8.3$ & 552.7 & 533.4 & 534.9 & 532.5 \\
\bottomrule
\end{tabular}
\vspace{1mm}
\caption{Inference scaling and error analysis for Qwen3-8B on AIME 2025. We evaluate pass@$k$ (\%) and average generation cost per question over 4 different seeds.}
\label{tab:detailed_performance_S16_Qwen3-8B_k64_aime25}
\end{table}

\begin{table}[hp]
\centering
\small
\setlength{\tabcolsep}{4pt}
\begin{tabular}{l c crrrr crrrr}
\toprule
 &  & \multicolumn{5}{c}{\textbf{pass@$k$ (\%)}} & \multicolumn{5}{c}{\textbf{Generated Tokens ($\times 10^3$)}} \\
\cmidrule(lr){3-7}\cmidrule(lr){8-12}
Method & $k$ & Mean $\pm$ SD & $s$=0 & $s$=1 & $s$=2 & $s$=3 & Mean $\pm$ SD & $s$=0 & $s$=1 & $s$=2 & $s$=3 \\
\midrule
\multirow{10}{*}{TTEL} & 4 & $41.7 \pm 3.7$ & 40.0 & 46.7 & 43.3 & 36.7 & $40.9 \pm 1.5$ & 42.4 & 38.4 & 41.9 & 41.0 \\
 & 8 & $47.5 \pm 5.5$ & 43.3 & 56.7 & 43.3 & 46.7 & $75.1 \pm 2.1$ & 76.5 & 71.4 & 75.9 & 76.6 \\
 & 12 & $51.7 \pm 5.0$ & 50.0 & 60.0 & 50.0 & 46.7 & $110.2 \pm 3.3$ & 109.5 & 105.2 & 112.4 & 113.8 \\
 & 16 & $54.2 \pm 3.6$ & 53.3 & 60.0 & 53.3 & 50.0 & $146.6 \pm 4.5$ & 142.8 & 141.5 & 150.1 & 152.0 \\
 & 24 & $55.8 \pm 4.9$ & 56.7 & 63.3 & 53.3 & 50.0 & $216.7 \pm 8.3$ & 207.9 & 212.2 & 216.5 & 230.0 \\
 & 32 & $57.5 \pm 3.6$ & 56.7 & 63.3 & 53.3 & 56.7 & $280.7 \pm 15.6$ & 265.1 & 269.7 & 283.0 & 305.2 \\
 & 40 & $60.0 \pm 4.1$ & 56.7 & 66.7 & 56.7 & 60.0 & $344.6 \pm 22.2$ & 322.1 & 328.4 & 348.9 & 379.0 \\
 & 48 & $60.8 \pm 3.6$ & 56.7 & 66.7 & 60.0 & 60.0 & $408.2 \pm 32.8$ & 379.5 & 377.4 & 418.7 & 457.4 \\
 & 56 & $60.8 \pm 3.6$ & 56.7 & 66.7 & 60.0 & 60.0 & $469.5 \pm 42.7$ & 434.3 & 428.4 & 480.5 & 534.8 \\
 & 64 & $60.8 \pm 3.6$ & 56.7 & 66.7 & 60.0 & 60.0 & $527.9 \pm 52.3$ & 490.0 & 474.9 & 537.6 & 609.3 \\
\midrule
\multirow{10}{*}{\shortstack[l]{Independent\\Sampling}} & 4 & $40.0 \pm 2.4$ & 40.0 & 36.7 & 43.3 & 40.0 & $57.4 \pm 0.6$ & 57.1 & 57.8 & 56.5 & 58.0 \\
 & 8 & $42.5 \pm 3.6$ & 46.7 & 36.7 & 43.3 & 43.3 & $114.7 \pm 0.8$ & 114.7 & 115.0 & 113.4 & 115.6 \\
 & 12 & $43.3 \pm 4.7$ & 50.0 & 36.7 & 43.3 & 43.3 & $171.8 \pm 0.8$ & 171.4 & 172.2 & 170.8 & 172.8 \\
 & 16 & $45.0 \pm 3.7$ & 50.0 & 40.0 & 43.3 & 46.7 & $228.8 \pm 1.0$ & 227.5 & 229.3 & 228.2 & 230.2 \\
 & 24 & $45.8 \pm 2.8$ & 50.0 & 43.3 & 43.3 & 46.7 & $343.3 \pm 1.8$ & 341.1 & 343.4 & 342.7 & 346.0 \\
 & 32 & $49.2 \pm 3.6$ & 53.3 & 50.0 & 43.3 & 50.0 & $458.0 \pm 1.7$ & 456.2 & 458.3 & 456.8 & 460.7 \\
 & 40 & $51.7 \pm 5.5$ & 56.7 & 56.7 & 43.3 & 50.0 & $572.5 \pm 2.3$ & 570.1 & 572.5 & 571.1 & 576.1 \\
 & 48 & $54.2 \pm 8.6$ & 66.7 & 56.7 & 43.3 & 50.0 & $687.5 \pm 2.8$ & 684.6 & 688.0 & 685.7 & 691.8 \\
 & 56 & $55.0 \pm 9.0$ & 66.7 & 60.0 & 43.3 & 50.0 & $802.2 \pm 2.9$ & 799.4 & 801.9 & 800.6 & 806.9 \\
 & 64 & $55.8 \pm 8.6$ & 66.7 & 60.0 & 43.3 & 53.3 & $917.1 \pm 2.7$ & 914.5 & 916.7 & 915.8 & 921.7 \\
\midrule
\multirow{10}{*}{\shortstack[l]{Multi-Turn\\Refinement}} & 4 & $41.7 \pm 1.7$ & 43.3 & 43.3 & 40.0 & 40.0 & $45.6 \pm 1.1$ & 46.5 & 45.1 & 46.7 & 44.0 \\
 & 8 & $46.7 \pm 5.3$ & 43.3 & 53.3 & 50.0 & 40.0 & $74.2 \pm 3.2$ & 77.3 & 70.6 & 77.3 & 71.5 \\
 & 12 & $49.2 \pm 3.6$ & 50.0 & 53.3 & 50.0 & 43.3 & $96.9 \pm 4.5$ & 101.8 & 91.1 & 100.7 & 94.0 \\
 & 16 & $50.0 \pm 4.1$ & 50.0 & 53.3 & 53.3 & 43.3 & $116.4 \pm 6.1$ & 122.8 & 109.1 & 121.9 & 111.7 \\
 & 24 & $51.7 \pm 2.9$ & 53.3 & 53.3 & 53.3 & 46.7 & $153.1 \pm 7.4$ & 160.4 & 143.9 & 160.4 & 147.6 \\
 & 32 & $53.3 \pm 4.1$ & 56.7 & 53.3 & 56.7 & 46.7 & $187.3 \pm 7.5$ & 193.8 & 178.3 & 195.6 & 181.5 \\
 & 40 & $53.3 \pm 4.1$ & 56.7 & 53.3 & 56.7 & 46.7 & $220.6 \pm 8.0$ & 226.8 & 210.4 & 229.8 & 215.3 \\
 & 48 & $53.3 \pm 4.1$ & 56.7 & 53.3 & 56.7 & 46.7 & $253.7 \pm 6.6$ & 259.8 & 244.8 & 260.1 & 250.0 \\
 & 56 & $54.2 \pm 2.8$ & 56.7 & 53.3 & 56.7 & 50.0 & $287.5 \pm 4.8$ & 292.5 & 279.9 & 290.2 & 287.4 \\
 & 64 & $54.2 \pm 2.8$ & 56.7 & 53.3 & 56.7 & 50.0 & $320.9 \pm 4.6$ & 326.5 & 313.8 & 322.5 & 320.8 \\
\midrule
\multirow{4}{*}{RSA} & 16 ($t_{0}$) & $47.5 \pm 6.4$ & 43.3 & 40.0 & 50.0 & 56.7 & $229.0 \pm 0.4$ & 229.3 & 229.2 & 228.3 & 229.4 \\
 & 16 ($t_{1}$) & $56.7 \pm 2.4$ & 56.7 & 56.7 & 53.3 & 60.0 & $365.0 \pm 3.7$ & 358.6 & 367.2 & 366.0 & 368.1 \\
 & 16 ($t_{2}$) & $54.2 \pm 2.8$ & 56.7 & 53.3 & 50.0 & 56.7 & $461.8 \pm 4.9$ & 453.4 & 464.2 & 465.5 & 464.1 \\
 & 16 ($t_{3}$) & $49.2 \pm 3.6$ & 50.0 & 43.3 & 50.0 & 53.3 & $547.5 \pm 4.7$ & 539.5 & 550.9 & 551.0 & 548.6 \\
\bottomrule
\end{tabular}
\vspace{1mm}
\caption{Inference scaling and error analysis for Qwen3-8B on HMMT. We evaluate pass@$k$ (\%) and average generation cost per question over 4 different seeds.}
\label{tab:detailed_performance_S17_Qwen3-8B_k64_hmmt}
\end{table}

\begin{table}[hp]
\centering
\small
\setlength{\tabcolsep}{4pt}
\begin{tabular}{l c crrr crrr}
\toprule
 &  & \multicolumn{4}{c}{\textbf{pass@$k$ (\%)}} & \multicolumn{4}{c}{\textbf{Generated Tokens ($\times 10^3$)}} \\
\cmidrule(lr){3-6}\cmidrule(lr){7-10}
Method & $k$ & Mean $\pm$ SD & $s$=0 & $s$=1 & $s$=2 & Mean $\pm$ SD & $s$=0 & $s$=1 & $s$=2 \\
\midrule
\multirow{10}{*}{TTEL} & 4 & $51.7 \pm 2.5$ & 53.4 & 53.4 & 48.1 & $25.4 \pm 0.8$ & 26.5 & 25.1 & 24.7 \\
 & 8 & $57.0 \pm 1.6$ & 58.8 & 57.3 & 55.0 & $47.3 \pm 0.6$ & 46.7 & 47.1 & 48.1 \\
 & 12 & $59.3 \pm 1.4$ & 60.3 & 60.3 & 57.3 & $67.9 \pm 1.1$ & 66.7 & 67.6 & 69.3 \\
 & 16 & $61.8 \pm 1.1$ & 63.4 & 61.1 & 61.1 & $87.9 \pm 0.8$ & 86.8 & 88.0 & 88.8 \\
 & 24 & $63.9 \pm 1.6$ & 65.6 & 61.8 & 64.1 & $126.6 \pm 2.5$ & 123.1 & 128.3 & 128.4 \\
 & 32 & $65.9 \pm 1.3$ & 66.4 & 64.1 & 67.2 & $165.2 \pm 4.6$ & 159.2 & 170.3 & 166.1 \\
 & 40 & $67.2 \pm 0.6$ & 67.9 & 66.4 & 67.2 & $203.4 \pm 6.4$ & 195.2 & 210.8 & 204.1 \\
 & 48 & $68.4 \pm 0.4$ & 67.9 & 68.7 & 68.7 & $239.9 \pm 8.2$ & 229.4 & 249.3 & 241.0 \\
 & 56 & $69.0 \pm 0.7$ & 67.9 & 69.5 & 69.5 & $276.6 \pm 9.4$ & 265.0 & 288.0 & 276.8 \\
 & 64 & $69.5 \pm 0.6$ & 68.7 & 70.2 & 69.5 & $312.8 \pm 10.7$ & 299.5 & 325.7 & 313.2 \\
\midrule
\multirow{10}{*}{\shortstack[l]{Independent\\Sampling}} & 4 & $52.4 \pm 1.9$ & 55.0 & 50.4 & 51.9 & $47.0 \pm 0.3$ & 46.7 & 46.9 & 47.5 \\
 & 8 & $56.0 \pm 1.0$ & 57.3 & 55.0 & 55.7 & $93.9 \pm 0.5$ & 93.6 & 93.5 & 94.5 \\
 & 12 & $57.8 \pm 1.3$ & 59.5 & 56.5 & 57.3 & $140.8 \pm 0.3$ & 140.5 & 140.8 & 141.2 \\
 & 16 & $60.1 \pm 0.7$ & 61.1 & 59.5 & 59.5 & $187.7 \pm 0.3$ & 187.4 & 187.8 & 188.0 \\
 & 24 & $61.1 \pm 1.2$ & 62.6 & 61.1 & 59.5 & $281.6 \pm 0.2$ & 281.3 & 281.5 & 281.9 \\
 & 32 & $64.4 \pm 2.2$ & 67.2 & 61.8 & 64.1 & $375.2 \pm 0.5$ & 374.8 & 374.9 & 375.8 \\
 & 40 & $65.4 \pm 1.9$ & 67.9 & 63.4 & 64.9 & $468.7 \pm 0.5$ & 468.5 & 468.3 & 469.4 \\
 & 48 & $65.6 \pm 1.6$ & 67.9 & 64.1 & 64.9 & $562.7 \pm 0.3$ & 562.3 & 562.7 & 563.2 \\
 & 56 & $66.7 \pm 1.0$ & 67.9 & 65.6 & 66.4 & $656.1 \pm 0.4$ & 655.6 & 656.4 & 656.4 \\
 & 64 & $67.4 \pm 1.4$ & 69.5 & 66.4 & 66.4 & $749.9 \pm 0.1$ & 749.9 & 750.1 & 749.8 \\
\midrule
\multirow{10}{*}{\shortstack[l]{Multi-Turn\\Refinement}} & 4 & $50.6 \pm 2.5$ & 53.4 & 51.1 & 47.3 & $29.8 \pm 0.3$ & 29.3 & 29.8 & 30.1 \\
 & 8 & $51.1 \pm 2.9$ & 54.2 & 51.9 & 47.3 & $49.4 \pm 1.1$ & 47.9 & 50.2 & 50.2 \\
 & 12 & $52.7 \pm 2.7$ & 55.0 & 54.2 & 48.9 & $72.0 \pm 1.7$ & 69.7 & 73.5 & 72.7 \\
 & 16 & $53.9 \pm 2.2$ & 56.5 & 54.2 & 51.1 & $94.0 \pm 1.2$ & 92.3 & 94.8 & 94.9 \\
 & 24 & $55.0 \pm 1.1$ & 56.5 & 54.2 & 54.2 & $132.1 \pm 0.7$ & 131.2 & 132.1 & 133.0 \\
 & 32 & $55.0 \pm 1.1$ & 56.5 & 54.2 & 54.2 & $167.1 \pm 0.2$ & 166.9 & 167.0 & 167.4 \\
 & 40 & $55.0 \pm 1.1$ & 56.5 & 54.2 & 54.2 & $200.6 \pm 0.6$ & 200.4 & 200.0 & 201.4 \\
 & 48 & $55.2 \pm 1.0$ & 56.5 & 55.0 & 54.2 & $233.1 \pm 0.2$ & 232.7 & 233.1 & 233.3 \\
 & 56 & $55.7 \pm 0.6$ & 56.5 & 55.7 & 55.0 & $265.6 \pm 0.8$ & 264.4 & 266.3 & 266.1 \\
 & 64 & $56.0 \pm 1.0$ & 57.3 & 55.7 & 55.0 & $298.1 \pm 1.5$ & 296.0 & 299.4 & 299.0 \\
\midrule
\multirow{4}{*}{RSA} & 16 ($t_{0}$) & $58.8 \pm 1.2$ & 58.8 & 60.3 & 57.3 & $187.2 \pm 0.2$ & 187.0 & 187.2 & 187.5 \\
 & 16 ($t_{1}$) & $64.4 \pm 1.0$ & 63.4 & 64.1 & 65.6 & $269.0 \pm 1.0$ & 270.2 & 267.7 & 269.2 \\
 & 16 ($t_{2}$) & $63.6 \pm 0.4$ & 64.1 & 63.4 & 63.4 & $415.7 \pm 0.8$ & 415.3 & 415.1 & 416.8 \\
 & 16 ($t_{3}$) & $60.8 \pm 0.7$ & 60.3 & 61.8 & 60.3 & $522.9 \pm 2.3$ & 522.8 & 520.2 & 525.7 \\
\bottomrule
\end{tabular}
\vspace{1mm}
\caption{Inference scaling and error analysis for Qwen3-4B-Thinking-2507 on LiveCodeBench (LCB). We evaluate pass@$k$ (\%) and average generation cost per question over 3 different seeds.}
\label{tab:detailed_performance_S10_Qwen3-4B-Think_k64_lcb}
\end{table}

\begin{table}[hp]
\centering
\small
\setlength{\tabcolsep}{4pt}
\begin{tabular}{l c crrrr crrrr}
\toprule
 &  & \multicolumn{5}{c}{\textbf{pass@$k$ (\%)}} & \multicolumn{5}{c}{\textbf{Generated Tokens ($\times 10^3$)}} \\
\cmidrule(lr){3-7}\cmidrule(lr){8-12}
Method & $k$ & Mean $\pm$ SD & $s$=0 & $s$=1 & $s$=2 & $s$=3 & Mean $\pm$ SD & $s$=0 & $s$=1 & $s$=2 & $s$=3 \\
\midrule
\multirow{10}{*}{TTEL} & 4 & $73.3 \pm 2.4$ & 76.7 & 73.3 & 70.0 & 73.3 & $40.3 \pm 0.5$ & 39.8 & 40.4 & 40.0 & 41.0 \\
 & 8 & $80.8 \pm 2.8$ & 83.3 & 76.7 & 83.3 & 80.0 & $76.7 \pm 1.1$ & 77.4 & 75.6 & 75.6 & 78.1 \\
 & 12 & $85.0 \pm 1.7$ & 86.7 & 83.3 & 86.7 & 83.3 & $108.2 \pm 2.1$ & 109.8 & 104.7 & 109.7 & 108.4 \\
 & 16 & $85.0 \pm 1.7$ & 86.7 & 83.3 & 86.7 & 83.3 & $141.9 \pm 4.3$ & 144.0 & 136.5 & 147.6 & 139.4 \\
 & 24 & $85.0 \pm 1.7$ & 86.7 & 83.3 & 86.7 & 83.3 & $205.4 \pm 8.4$ & 206.4 & 198.9 & 218.7 & 197.5 \\
 & 32 & $85.0 \pm 1.7$ & 86.7 & 83.3 & 86.7 & 83.3 & $264.7 \pm 11.8$ & 259.6 & 261.7 & 284.4 & 253.1 \\
 & 40 & $85.0 \pm 1.7$ & 86.7 & 83.3 & 86.7 & 83.3 & $323.6 \pm 15.4$ & 311.0 & 321.7 & 349.4 & 312.4 \\
 & 48 & $85.0 \pm 1.7$ & 86.7 & 83.3 & 86.7 & 83.3 & $379.4 \pm 22.2$ & 353.1 & 380.9 & 413.9 & 369.9 \\
 & 56 & $85.0 \pm 1.7$ & 86.7 & 83.3 & 86.7 & 83.3 & $437.0 \pm 28.5$ & 404.6 & 433.2 & 482.7 & 427.3 \\
 & 64 & $85.0 \pm 1.7$ & 86.7 & 83.3 & 86.7 & 83.3 & $491.0 \pm 33.0$ & 451.9 & 481.6 & 543.2 & 487.4 \\
\midrule
\multirow{10}{*}{\shortstack[l]{Independent\\Sampling}} & 4 & $43.3 \pm 4.1$ & 36.7 & 43.3 & 46.7 & 46.7 & $58.1 \pm 0.5$ & 59.0 & 57.8 & 57.8 & 58.0 \\
 & 8 & $50.0 \pm 3.3$ & 46.7 & 53.3 & 46.7 & 53.3 & $116.3 \pm 0.7$ & 117.1 & 115.7 & 116.8 & 115.7 \\
 & 12 & $50.8 \pm 2.8$ & 46.7 & 53.3 & 50.0 & 53.3 & $174.1 \pm 0.5$ & 174.9 & 173.8 & 174.1 & 173.7 \\
 & 16 & $51.7 \pm 3.7$ & 46.7 & 53.3 & 50.0 & 56.7 & $232.7 \pm 0.8$ & 234.0 & 232.5 & 232.4 & 231.7 \\
 & 24 & $56.7 \pm 4.1$ & 50.0 & 60.0 & 56.7 & 60.0 & $349.0 \pm 0.6$ & 349.8 & 348.1 & 349.2 & 348.7 \\
 & 32 & $58.3 \pm 2.9$ & 53.3 & 60.0 & 60.0 & 60.0 & $464.7 \pm 0.7$ & 463.6 & 464.8 & 465.4 & 464.8 \\
 & 40 & $60.0 \pm 2.4$ & 56.7 & 60.0 & 63.3 & 60.0 & $581.0 \pm 1.0$ & 579.8 & 582.1 & 580.2 & 581.7 \\
 & 48 & $60.8 \pm 2.8$ & 56.7 & 63.3 & 63.3 & 60.0 & $696.2 \pm 1.9$ & 694.4 & 698.1 & 694.3 & 698.0 \\
 & 56 & $60.8 \pm 2.8$ & 56.7 & 63.3 & 63.3 & 60.0 & $812.2 \pm 1.5$ & 810.8 & 813.9 & 810.5 & 813.5 \\
 & 64 & $61.7 \pm 1.7$ & 60.0 & 63.3 & 63.3 & 60.0 & $928.9 \pm 2.8$ & 926.9 & 931.3 & 925.4 & 932.0 \\
\midrule
\multirow{10}{*}{\shortstack[l]{Multi-Turn\\Refinement}} & 4 & $71.7 \pm 1.7$ & 73.3 & 73.3 & 70.0 & 70.0 & $31.2 \pm 0.8$ & 32.0 & 30.6 & 30.3 & 32.0 \\
 & 8 & $71.7 \pm 1.7$ & 73.3 & 73.3 & 70.0 & 70.0 & $45.8 \pm 0.8$ & 46.5 & 44.7 & 45.3 & 46.7 \\
 & 12 & $75.0 \pm 1.7$ & 76.7 & 76.7 & 73.3 & 73.3 & $64.1 \pm 1.3$ & 65.7 & 63.5 & 62.4 & 65.0 \\
 & 16 & $75.0 \pm 1.7$ & 76.7 & 76.7 & 73.3 & 73.3 & $84.0 \pm 1.9$ & 85.1 & 83.1 & 81.5 & 86.4 \\
 & 24 & $75.0 \pm 1.7$ & 76.7 & 76.7 & 73.3 & 73.3 & $119.0 \pm 2.8$ & 120.3 & 116.4 & 116.3 & 122.9 \\
 & 32 & $76.7 \pm 3.3$ & 80.0 & 80.0 & 73.3 & 73.3 & $153.2 \pm 5.3$ & 154.2 & 147.4 & 149.9 & 161.4 \\
 & 40 & $76.7 \pm 3.3$ & 80.0 & 80.0 & 73.3 & 73.3 & $188.3 \pm 6.3$ & 189.6 & 181.2 & 184.5 & 197.8 \\
 & 48 & $78.3 \pm 1.7$ & 80.0 & 80.0 & 76.7 & 76.7 & $224.4 \pm 6.3$ & 227.3 & 216.4 & 221.0 & 232.9 \\
 & 56 & $79.2 \pm 1.4$ & 80.0 & 80.0 & 76.7 & 80.0 & $261.1 \pm 7.5$ & 265.4 & 251.8 & 256.4 & 271.0 \\
 & 64 & $80.0 \pm 2.4$ & 83.3 & 80.0 & 76.7 & 80.0 & $297.8 \pm 7.3$ & 302.9 & 288.4 & 293.4 & 306.7 \\
\midrule
\multirow{4}{*}{RSA} & 16 ($t_{0}$) & $53.3 \pm 4.7$ & 53.3 & 60.0 & 46.7 & 53.3 & $232.6 \pm 0.6$ & 232.6 & 231.8 & 233.5 & 232.6 \\
 & 16 ($t_{1}$) & $69.2 \pm 2.8$ & 66.7 & 70.0 & 73.3 & 66.7 & $268.4 \pm 1.8$ & 270.1 & 266.4 & 266.9 & 270.3 \\
 & 16 ($t_{2}$) & $58.3 \pm 3.7$ & 63.3 & 56.7 & 60.0 & 53.3 & $375.7 \pm 4.3$ & 378.8 & 369.2 & 374.6 & 380.3 \\
 & 16 ($t_{3}$) & $53.3 \pm 4.1$ & 56.7 & 53.3 & 56.7 & 46.7 & $449.2 \pm 5.5$ & 446.4 & 444.7 & 447.1 & 458.6 \\
\bottomrule
\end{tabular}
\vspace{1mm}
\caption{Inference scaling and error analysis for Qwen3-4B-Thinking-2507 on AIME 2025. We evaluate pass@$k$ (\%) and average generation cost per question over 4 different seeds.}
\label{tab:detailed_performance_S18_Qwen3-4B-Think_k64_aime25}
\end{table}

\begin{table}[hp]
\centering
\small
\setlength{\tabcolsep}{4pt}
\begin{tabular}{l c crrrr crrrr}
\toprule
 &  & \multicolumn{5}{c}{\textbf{pass@$k$ (\%)}} & \multicolumn{5}{c}{\textbf{Generated Tokens ($\times 10^3$)}} \\
\cmidrule(lr){3-7}\cmidrule(lr){8-12}
Method & $k$ & Mean $\pm$ SD & $s$=0 & $s$=1 & $s$=2 & $s$=3 & Mean $\pm$ SD & $s$=0 & $s$=1 & $s$=2 & $s$=3 \\
\midrule
\multirow{10}{*}{TTEL} & 4 & $45.8 \pm 8.3$ & 53.3 & 43.3 & 53.3 & 33.3 & $43.3 \pm 0.7$ & 42.6 & 42.6 & 43.8 & 44.2 \\
 & 8 & $55.8 \pm 4.9$ & 56.7 & 53.3 & 63.3 & 50.0 & $81.5 \pm 0.9$ & 80.5 & 81.3 & 82.9 & 81.2 \\
 & 12 & $60.0 \pm 5.3$ & 63.3 & 53.3 & 66.7 & 56.7 & $115.4 \pm 1.3$ & 116.0 & 113.2 & 116.0 & 116.4 \\
 & 16 & $62.5 \pm 4.3$ & 66.7 & 56.7 & 66.7 & 60.0 & $147.9 \pm 3.5$ & 151.1 & 145.4 & 151.5 & 143.5 \\
 & 24 & $64.2 \pm 2.8$ & 66.7 & 60.0 & 66.7 & 63.3 & $210.7 \pm 7.8$ & 217.9 & 200.9 & 218.7 & 205.3 \\
 & 32 & $65.0 \pm 1.7$ & 66.7 & 63.3 & 66.7 & 63.3 & $269.6 \pm 10.6$ & 279.9 & 256.2 & 280.1 & 262.2 \\
 & 40 & $68.3 \pm 2.9$ & 70.0 & 63.3 & 70.0 & 70.0 & $325.1 \pm 14.2$ & 338.2 & 305.8 & 339.2 & 317.4 \\
 & 48 & $68.3 \pm 2.9$ & 70.0 & 63.3 & 70.0 & 70.0 & $375.6 \pm 15.7$ & 390.5 & 355.9 & 391.4 & 364.4 \\
 & 56 & $69.2 \pm 3.6$ & 70.0 & 63.3 & 73.3 & 70.0 & $424.4 \pm 18.8$ & 438.5 & 403.4 & 447.4 & 408.5 \\
 & 64 & $69.2 \pm 3.6$ & 70.0 & 63.3 & 73.3 & 70.0 & $472.5 \pm 23.1$ & 487.7 & 447.5 & 502.2 & 452.6 \\
\midrule
\multirow{10}{*}{\shortstack[l]{Independent\\Sampling}} & 4 & $28.3 \pm 1.7$ & 26.7 & 30.0 & 30.0 & 26.7 & $61.7 \pm 0.5$ & 62.1 & 60.9 & 62.2 & 61.6 \\
 & 8 & $32.5 \pm 1.4$ & 33.3 & 33.3 & 33.3 & 30.0 & $123.1 \pm 0.9$ & 123.6 & 121.5 & 123.9 & 123.5 \\
 & 12 & $34.2 \pm 1.4$ & 36.7 & 33.3 & 33.3 & 33.3 & $184.7 \pm 1.0$ & 184.8 & 183.1 & 185.6 & 185.3 \\
 & 16 & $35.0 \pm 1.7$ & 36.7 & 36.7 & 33.3 & 33.3 & $246.7 \pm 1.0$ & 246.6 & 245.1 & 247.8 & 247.1 \\
 & 24 & $40.0 \pm 2.4$ & 43.3 & 40.0 & 40.0 & 36.7 & $369.5 \pm 1.3$ & 369.4 & 367.7 & 371.3 & 369.5 \\
 & 32 & $42.5 \pm 3.6$ & 46.7 & 43.3 & 43.3 & 36.7 & $492.4 \pm 0.9$ & 491.2 & 492.0 & 493.7 & 492.7 \\
 & 40 & $45.0 \pm 5.0$ & 46.7 & 50.0 & 46.7 & 36.7 & $615.0 \pm 0.4$ & 614.4 & 615.0 & 615.0 & 615.6 \\
 & 48 & $45.0 \pm 5.0$ & 46.7 & 50.0 & 46.7 & 36.7 & $737.7 \pm 0.9$ & 736.6 & 737.5 & 737.4 & 739.2 \\
 & 56 & $46.7 \pm 4.1$ & 50.0 & 50.0 & 46.7 & 40.0 & $860.9 \pm 0.9$ & 859.7 & 860.8 & 860.6 & 862.3 \\
 & 64 & $47.5 \pm 2.8$ & 50.0 & 50.0 & 46.7 & 43.3 & $984.2 \pm 1.3$ & 982.8 & 985.5 & 982.8 & 985.5 \\
\midrule
\multirow{10}{*}{\shortstack[l]{Multi-Turn\\Refinement}} & 4 & $43.3 \pm 4.1$ & 40.0 & 43.3 & 50.0 & 40.0 & $33.9 \pm 0.5$ & 33.3 & 33.9 & 34.5 & 34.1 \\
 & 8 & $45.0 \pm 2.9$ & 43.3 & 43.3 & 50.0 & 43.3 & $49.1 \pm 0.4$ & 48.7 & 49.0 & 49.1 & 49.7 \\
 & 12 & $45.0 \pm 2.9$ & 43.3 & 43.3 & 50.0 & 43.3 & $67.5 \pm 0.3$ & 68.0 & 67.6 & 67.0 & 67.5 \\
 & 16 & $45.0 \pm 2.9$ & 43.3 & 43.3 & 50.0 & 43.3 & $89.7 \pm 1.1$ & 88.9 & 91.7 & 89.3 & 88.9 \\
 & 24 & $45.0 \pm 2.9$ & 43.3 & 43.3 & 50.0 & 43.3 & $127.7 \pm 2.6$ & 127.8 & 131.9 & 125.9 & 125.3 \\
 & 32 & $45.0 \pm 2.9$ & 43.3 & 43.3 & 50.0 & 43.3 & $163.8 \pm 3.4$ & 162.9 & 169.0 & 163.6 & 159.6 \\
 & 40 & $45.8 \pm 4.3$ & 43.3 & 43.3 & 53.3 & 43.3 & $202.1 \pm 4.3$ & 199.0 & 208.1 & 204.2 & 197.2 \\
 & 48 & $45.8 \pm 4.3$ & 43.3 & 43.3 & 53.3 & 43.3 & $239.6 \pm 5.3$ & 233.7 & 246.8 & 242.6 & 235.5 \\
 & 56 & $45.8 \pm 4.3$ & 43.3 & 43.3 & 53.3 & 43.3 & $277.7 \pm 6.1$ & 270.2 & 286.3 & 280.0 & 274.5 \\
 & 64 & $45.8 \pm 4.3$ & 43.3 & 43.3 & 53.3 & 43.3 & $316.3 \pm 7.7$ & 306.0 & 326.9 & 319.3 & 313.1 \\
\midrule
\multirow{4}{*}{RSA} & 16 ($t_{0}$) & $33.3 \pm 2.4$ & 33.3 & 30.0 & 36.7 & 33.3 & $246.4 \pm 0.5$ & 246.6 & 245.6 & 246.8 & 246.5 \\
 & 16 ($t_{1}$) & $35.0 \pm 3.7$ & 40.0 & 33.3 & 30.0 & 36.7 & $278.5 \pm 2.0$ & 280.2 & 276.1 & 276.9 & 280.8 \\
 & 16 ($t_{2}$) & $33.3 \pm 5.3$ & 30.0 & 40.0 & 26.7 & 36.7 & $394.9 \pm 2.6$ & 396.7 & 390.5 & 397.0 & 395.5 \\
 & 16 ($t_{3}$) & $21.7 \pm 1.7$ & 23.3 & 20.0 & 20.0 & 23.3 & $473.7 \pm 1.5$ & 475.6 & 474.0 & 473.8 & 471.4 \\
\bottomrule
\end{tabular}
\vspace{1mm}
\caption{Inference scaling and error analysis for Qwen3-4B-Thinking-2507 on HMMT. We evaluate pass@$k$ (\%) and average generation cost per question over 4 different seeds.}
\label{tab:detailed_performance_S19_Qwen3-4B-Think_k64_hmmt}
\end{table}

\FloatBarrier

\section{Qualitative Example}
\label{app:qualitative_example}

\definecolor{spikeT}{HTML}{CCE5FF}   
\definecolor{spikeTB}{HTML}{FFE0B2}  
\definecolor{spikeMagL}{HTML}{DEEBF7} 
\definecolor{spikeMagM}{HTML}{9ECAE1} 
\definecolor{spikeMagH}{HTML}{4292C6} 

\definecolor{spikeMagOrL}{HTML}{FFF4E5} 
\definecolor{spikeMagOrM}{HTML}{FFE4B5} 
\definecolor{spikeMagOrH}{HTML}{FF9900} 

\begin{figure*}[ht!]
\centering
\setlength{\fboxsep}{1.5pt}

{\setlength{\fboxsep}{1.5pt}%
\fbox{\parbox{\dimexpr\linewidth-2\fboxsep-2\fboxrule\relax}{\centering\footnotesize
\begin{tabular}{@{}c@{\;\;\;\;\;\;\;}c@{\;\;\;\;\;\;\;}c@{\;\;\;\;\;\;\;}c@{}}
{\color{spikeMagL}\rule{12pt}{8pt}}~$\Delta^{(T)} < 0.2$ &
{\color{spikeMagM}\rule{12pt}{8pt}}~$0.2 \le \Delta^{(T)} < 0.5$ &
{\color{spikeMagH}\rule{12pt}{8pt}}~$\Delta^{(T)} \ge 0.5$ &
\textbf{bold}~= branch point
\end{tabular}
}}}
\vspace{0.3em}

\begin{subfigure}[t]{\textwidth}
\centering
\fbox{%
\begin{minipage}{\dimexpr\linewidth-2\fboxsep-2\fboxrule\relax}%
\small
\setlength{\baselineskip}{13pt}
\raggedright
\noindent But how to model this\colorbox{spikeMagL}{\strut ?} Let's think of the problem as: for each query, the minimal stairs is the minimal\colorbox{spikeMagL}{\strut  sum} of vertical steps between the start and end positions, given that we can move via walkways between adjacent buildings (as long as the destination building's floor count is sufficient for the current floor).\par\noindent \par\noindent But how\colorbox{spikeMagM}{\strut  can} we model this?\par\noindent \par\noindent Another\colorbox{spikeMagM}{\strut  way} to think: the minimal number of stairs required is the sum of the vertical steps\colorbox{spikeMagM}{\strut  between} the starting floor and the ending floor,\colorbox{spikeMagM}{\strut  minus} any possible reductions.\colorbox{spikeMagH}{\strut  Wait}, no. Because you can choose different paths to change floors\colorbox{spikeMagM}{\strut .} For example, moving to a higher floor first, then down, etc.\par\noindent \par\noindent \colorbox{spikeMagL}{\strut Wait},\colorbox{spikeMagM}{\strut  perhaps} the minimal stairs is (Y\_i - Z\_i) if Y\_i \textgreater{} Z\_i, but that's not always the case. For example, if there's a way to go up from Y\_i to a higher floor, then down to Z\_i, but that's adding stairs. So maybe that's not helpful.\par\noindent \par\noindent 
\end{minipage}%
}

\caption{Spike Detection: all candidate teacher spikes ($\Delta^{(T)} > 0.06$).}
\label{fig:qual_panel_a}
\end{subfigure}

\vspace{0.8em}

{\setlength{\fboxsep}{1.5pt}%
\fbox{\parbox{\dimexpr\linewidth-2\fboxsep-2\fboxrule\relax}{\centering\footnotesize
\begin{tabular}{@{}c@{\;\;\;\;\;\;\;}c@{\;\;\;\;\;\;\;}c@{\;\;\;\;\;\;\;}c@{}}
{\color{spikeMagOrL}\rule{12pt}{8pt}}~$\Delta^{(B)} < 0.2$ &
{\color{spikeMagOrM}\rule{12pt}{8pt}}~$0.2 \le \Delta^{(B)} < 0.5$ &
{\color{spikeMagOrH}\rule{12pt}{8pt}}~$\Delta^{(B)} \ge 0.5$ &
\end{tabular}
}}}
\vspace{0.3em}

\begin{subfigure}[t]{\textwidth}
\centering
\fbox{%
\begin{minipage}{\dimexpr\linewidth-2\fboxsep-2\fboxrule\relax}%
\small
\setlength{\baselineskip}{13pt}
\raggedright
\noindent But how to model this\colorbox{spikeMagOrL}{\strut ?} Let's think of the problem as: for each query, the minimal stairs is the minimal\colorbox{spikeT}{\strut \textbf{ sum}} of vertical steps between the start and end positions, given that we can move via walkways between adjacent buildings (as long as the destination building's floor count is sufficient for the current floor).\par\noindent \par\noindent But how\colorbox{spikeMagOrM}{\strut  can} we model this?\par\noindent \par\noindent Another\colorbox{spikeMagOrL}{\strut  way} to think: the minimal number of stairs required is the sum of the vertical steps\colorbox{spikeMagOrM}{\strut  between} the starting floor and the ending floor,\colorbox{spikeMagOrM}{\strut  minus} any possible reductions.\colorbox{spikeMagOrH}{\strut  Wait}, no. Because you can choose different paths to change floors\colorbox{spikeMagOrM}{\strut .} For example, moving to a higher floor first, then down, etc.\par\noindent \par\noindent \colorbox{spikeMagOrM}{\strut Wait},\colorbox{spikeMagOrM}{\strut  perhaps} the minimal stairs is (Y\_i - Z\_i) if Y\_i \textgreater{} Z\_i, but that's not always the case. For example, if there's a way to go up from Y\_i to a higher floor, then down to Z\_i, but that's adding stairs. So maybe that's not helpful.\par\noindent \par\noindent 
\end{minipage}%
}

\caption{Spike Filtering via Spike Baseline. Blue = surviving spikes, Orange = baseline spikes.}
\label{fig:qual_panel_b}
\end{subfigure}

\vspace{0.8em}

\begin{subfigure}[t]{\textwidth}
\centering
\fbox{%
\begin{minipage}{\dimexpr\linewidth-2\fboxsep-2\fboxrule\relax}%
\small
\setlength{\baselineskip}{13pt}
\raggedright
\noindent But how to model this? Let's think of the problem as: for each query, the minimal stairs is the minimal \colorbox{spikeT}{\strut \textbf{number}} of stairs required to adjust floors when moving between buildings.\par\noindent \par\noindent Here's an insight: The minimal number of stairs is the sum of the vertical steps needed to adjust from the starting floor to a certain floor (say, X) in the starting building, then move via walkways to the ending building, and then adjust from X to the ending floor. The total stairs is \textbar{}Y - X\textbar{} + \textbar{}Z - X\textbar{}. To minimize this, we need to choose X such that there exists a path from the starting building to the ending building with all buildings along the path having at least X floors.\par\noindent \par\noindent The minimal value of \textbar{}Y - X\textbar{} + \textbar{}Z - X\textbar{} is \textbar{}Y - Z\textbar{}, which is achieved when X is between Y and Z. So, if there exists such an X, the answer is \textbar{}Y - Z\textbar{}. Otherwise, we need to choose the best possible X.\par\noindent \par\noindent Thus, the approach is:\par\noindent 
\end{minipage}%
}

\caption{Correction: branching from the selected spike leads to a successful answer (reward = 1.0).}
\label{fig:qual_panel_c}
\end{subfigure}

\caption{Qualitative analysis of spike detection and correction for Question 21 in LCBv6. }
\label{fig:qualitative_q21}
\end{figure*}

\newpage

The steps of spike detection, spike filtering and branching are illustrated in \cref{fig:qualitative_q21}
\begin{enumerate}[label=\alph*)]
    \item The teacher student probability difference spikes at many points, colored in blue according to magnitude ($\Delta^{(T)}$).
    \item Spike filtering removes tokens where the student-baseline gap ($\Delta^{(B)}$) is also large (colored in orange according to magnitude), isolating only spikes with genuine divergence in reasoning between student and teacher.
    \item When the model re-generates from the chosen surviving spike  (\textbf{sum}), it branches into a correct reasoning path which eventually leads to the correct answer.
\end{enumerate}

\section{Sensitivity Analysis of Spike Threshold}
\label{app:threshold_sensitivity}

\rsc{Fill Text}

\begin{table}[H]
\centering
\small
\begin{tabular}{lccc}
\toprule
$\tau$ & \textbf{pass@$k$ (\%)} & \textbf{Generated Tokens ($\times 10^3$)} & \textbf{Turn 1 Spikes} \\
\midrule
0.04 & 64.4 $\pm$ 0.5 & 93.9 $\pm$ 1.1 & 20.0 $\pm$ 0.6 \\
0.06 & 65.9 $\pm$ 1.6 & 94.3 $\pm$ 1.1 & 17.9 $\pm$ 0.4 \\
0.08 & 64.4 $\pm$ 1.6 & 96.6 $\pm$ 0.7 & 16.7 $\pm$ 0.8 \\
0.1 & 65.4 $\pm$ 2.3 & 95.6 $\pm$ 0.9 & 15.5 $\pm$ 0.3 \\
0.12 & 65.7 $\pm$ 1.3 & 96.7 $\pm$ 0.3 & 14.2 $\pm$ 0.4 \\
0.14 & 64.4 $\pm$ 1.9 & 97.3 $\pm$ 1.0 & 12.9 $\pm$ 0.5 \\
\bottomrule
\end{tabular}
\caption{Hyperparameter sweep for Spike Threshold (Qwen3-8B, LCBv6, $k=16$ rollouts). Mean and standard deviation over 3 seeds.}
\label{tab:threshold-sweep-main}
\end{table}

\begin{table}[H]
\centering
\small
\begin{tabular}{lrrrrrrrrr}
\toprule
 & \multicolumn{3}{c}{\textbf{pass@$k$ (\%)}} & \multicolumn{3}{c}{\textbf{Generated Tokens ($\times 10^3$)}} & \multicolumn{3}{c}{\textbf{Turn 1 Spikes}} \\
\cmidrule(lr){2-4}\cmidrule(lr){5-7}\cmidrule(lr){8-10}
$\tau$ & s=0 & s=1 & s=2 & s=0 & s=1 & s=2 & s=0 & s=1 & s=2 \\
\midrule
0.04 & 64.9 & 64.1 & 64.1 & 93.0 & 93.7 & 95.1 & 20.3 & 20.6 & 19.3 \\
0.06 & 67.2 & 64.1 & 66.4 & 94.6 & 93.1 & 95.2 & 18.3 & 17.8 & 17.6 \\
0.08 & 64.9 & 65.6 & 62.6 & 95.8 & 97.0 & 96.9 & 17.4 & 15.8 & 16.9 \\
0.1 & 67.9 & 63.4 & 64.9 & 94.7 & 95.5 & 96.5 & 15.8 & 15.1 & 15.5 \\
0.12 & 66.4 & 66.4 & 64.1 & 96.4 & 96.9 & 96.9 & 14.7 & 14.0 & 13.9 \\
0.14 & 64.1 & 66.4 & 62.6 & 96.9 & 96.6 & 98.5 & 13.4 & 12.9 & 12.4 \\
\bottomrule
\end{tabular}
\caption{Hyperparameter sweep for Spike Threshold (Qwen3-8B, LCBv6, $k=16$ rollouts). Results per seed (denoted by s)).}
\label{tab:threshold-sweep-appendix}
\end{table}

\section{Branching Analysis}
\label{app:branching_analysis}

\begin{figure}[H]
    \centering
    \includegraphics[width=\linewidth]{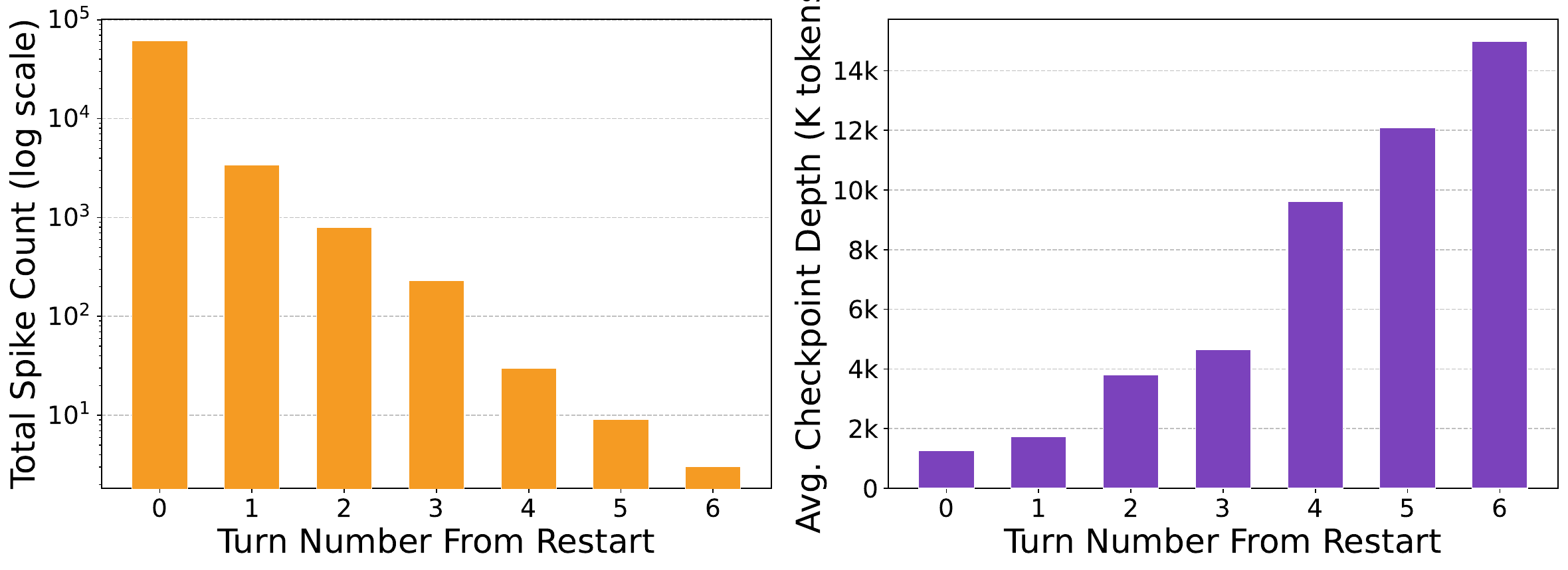}

    \caption{Total spike count and average checkpoint depth for successive turn numbers (relative to restarts), evaluated on LiveCodeBench V6 with Qwen3-8B ($k=16$) using {\ttsd}-GenFB. The number of spikes decreases exponentially between successive turns. \vspace{-0.1in}}
    \label{fig:spike_metrics_by_turn}
\end{figure}

\begin{figure}[H]
    \centering
    \includegraphics[width=\linewidth]{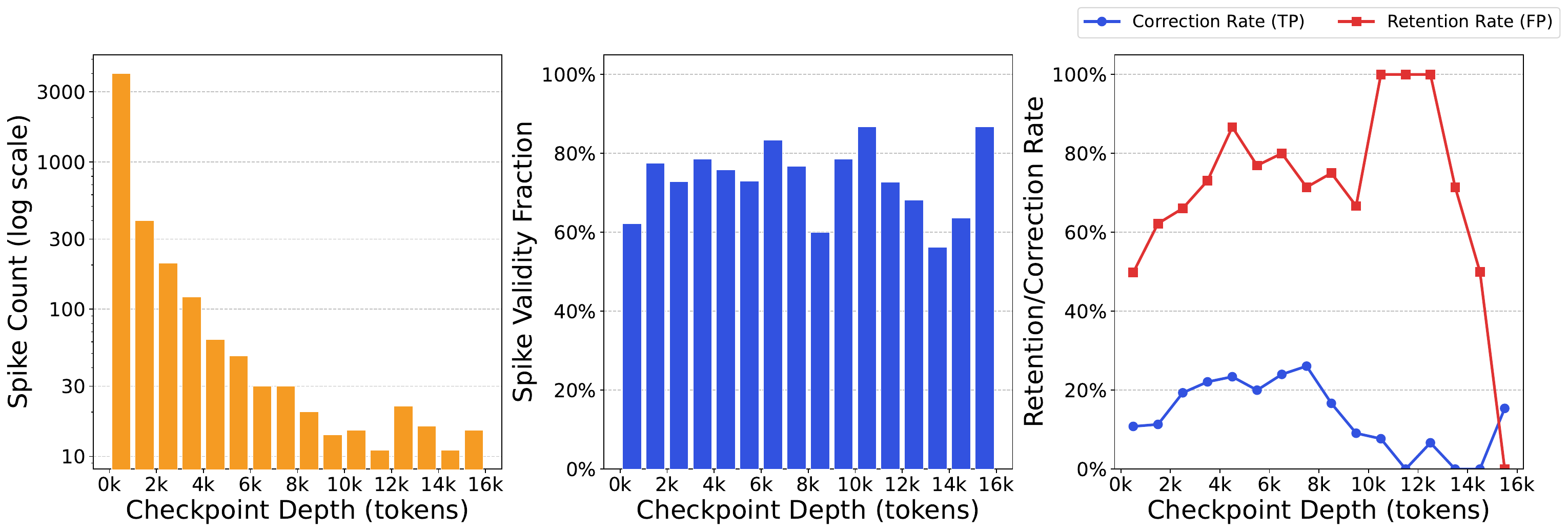}

    \caption{Spike count, validity of spikes and success rates based on Checkpoint Depth collected for {\ttsd}-GenFB on LiveCodeBench V6 with Qwen3-8B ($k=16$). A spike is invalid (or a False Positive) if the answer from that turn is correct. We define the success rate of the branched rollout as the correction rate for valid spikes and the retention rate for invalid spikes. We observe that retention rate is greater than correction rate. Correction rate decreases and retention rate increases after a checkpoint depth of 8k tokens. \vspace{-0.1in}}
    \label{fig:spike_metrics_by_depth}
\end{figure}

\end{document}